\setlist[itemize]{itemsep=0.01cm, topsep=0pt}
\definecolor{dark-gray}{gray}{0.3}
\definecolor{dkgray}{rgb}{.4,.4,.4}
\definecolor{dkblue}{rgb}{0,0,.5}
\definecolor{medblue}{rgb}{0,0,.75}
\definecolor{rust}{rgb}{0.5,0.1,0.1}
\definecolor{darkblue}{rgb}{0,0.08,0.45}
\newtheorem{theorem}{Theorem}
\newtheorem{lemma}{Lemma}
\newtheorem{corollary}{Corollary}
\newtheorem{remark}{Remark}
\renewcommand{\phi}{\varphi}
\newenvironment{sproof}{%
  \proof}{\endproof}
\DeclareMathOperator*{\argmin}{argmin}
\DeclareMathOperator{\dist}{dist}
\DeclareMathOperator{\diag}{diag}
\DeclareMathOperator{\prox}{prox}
\newcommand{\lr}[1]{\langle #1\rangle}
\newcommand{\n}[1]{\|#1 \|}
\renewcommand{\a}{\alpha}
\newcommand{\hv}{\hat v}
\newcommand{\hx}{\hat x}
\newcommand{\E}{\mathbb{E}}
\newcommand{\R}{\mathbb{R}}
\newcommand{\one}{\mathbbmss{1}}
\newcommand{\cX}{\mathcal X}
\newtheorem*{rep@theorem}{\rep@title}
\newcommand{\newreptheorem}[2]{%
        \newenvironment{rep#1}[1]{%
                \def\rep@title{#2 \ref{##1}}%
                \begin{rep@theorem}}%
                {\end{rep@theorem}}}
\crefname{lemma}{lemma}{lemmas}
\newmdtheoremenv[
outerlinewidth=2,
roundcorner=10 pt,
leftmargin=1,
rightmargin=1,
outerlinecolor=blue!70!black,
innertopmargin=\topskip,
splittopskip=\topskip,
ntheorem=true]{assumption}{Assumption}
\title{Convergence of adaptive algorithms for weakly convex constrained optimization}
\author{%
  Ahmet Alacaoglu \\
  EPFL\\
  \texttt{ahmet.alacaoglu@epfl.ch} \\
   \And
   Yura Malitsky \\
   EPFL \\
  \texttt{yurii.malitskyi@epfl.ch} \\
   \And
   Volkan Cevher \\
   EPFL \\
    \texttt{volkan.cevher@epfl.ch} \\
}
\begin{document}

\maketitle

\begin{abstract}
We analyze the adaptive first order algorithm AMSGrad, for
solving a constrained stochastic optimization problem with a weakly
convex objective. We prove the $\mathcal{\tilde O}(t^{-1/4})$ rate of convergence for
the norm of the gradient of Moreau envelope, which is the standard
stationarity measure for this class of problems.  It matches the known
rates that adaptive algorithms enjoy for the specific case of
unconstrained smooth stochastic optimization.  Our analysis
works with mini-batch size of $1$, constant first and second order
moment parameters, and possibly unbounded optimization domains.
Finally, we illustrate the applications and extensions of our results to specific
problems and algorithms.
\end{abstract}

\section{Introduction}
Adaptive first order methods have become a mainstay of neural network
training in recent years. Most of these methods build on the AdaGrad
framework~\cite{duchi2011adaptive}, which is a modification of online
gradient descent by incorporating the sum of the squared gradients
in the step size rule.  Based on the practical shortcomings of
AdaGrad for training neural networks,
RMSprop~\cite{tieleman2012lecture} and Adam~\cite{kingma2015adam}
proposed to use exponential moving averages for gradients and squared gradients with parameters $\beta_1$ and $\beta_2$, respectively. These
methods have seen a huge practical success.

The recent work~\cite{reddi2018convergence} identified a technical issue that affects Adam and RMSprop and proposed a new Adam-variant called AMSGrad that does not suffer from the same problem.
Theoretical properties of AMSGrad, AdaGrad and their variants for nonconvex optimization problems are studied in a number of recent papers~\cite{chen2019convergence,chen2020closing,zou2019sufficient,ward2019adagrad,li2019convergence,defossez2020convergence}.
These works focus on unconstrained smooth stochastic
optimization, where the standard analysis framework of the stochastic gradient descent
(SGD)~\cite{ghadimi2013stochastic} can be used.
Convergence of adaptive methods for the more general setting of
constrained and/or nonsmooth stochastic nonconvex optimization has remained open,
while these settings have broad practical applications~\cite{vieillard2019connections,marquez2017imposing,madry2018towards,ilyas2018black,davis2019stochastic,drusvyatskiy2019efficiency}.

In this work, we take a step towards this direction and establish the convergence of AMSGrad for solving the problem
\begin{equation}\label{eq: prob}
\min_{x\in\mathcal{X}} \left\{ f(x) = \mathbb{E}_\xi\left[ f(x; \xi)\right]\right\},
\end{equation}
where $f\colon \R^d\to \R$ is $\rho$-weakly convex,
$\mathcal{X}\subset \R^d$ is a closed convex set, and $\xi$ is a random variable following a fixed unknown distribution.
This template captures the setting of previous analyses when $f$ is $L$-smooth, as this implies $L$-weak convexity, and $\mathcal{X}=\mathbb{R}^d$.
On the other hand, there exist many applications when $\mathcal{X}\neq\mathbb{R}^d$~\cite{vieillard2019connections,marquez2017imposing,madry2018towards,ilyas2018black} or when $f$ is not $L$-smooth~\citep[Section 2.1]{davis2019stochastic},\cite{duchi2018stochastic,drusvyatskiy2019efficiency}.

It is well known that constrained stochastic optimization with
nonconvex functions presents challenges not met in the convex setting~\cite{ghadimi2016mini,chen2019zo}.
In particular, until the recent work of~\citet{davis2019stochastic},
even for SGD, increasing mini-batch sizes were required for
convergence in constrained optimization.
To study the behavior of AMSGrad for solving~\eqref{eq: prob}, we build on the analysis framework of~\cite{davis2019stochastic}.

\textbf{Contributions.} We show that AMSGrad achieves
$\mathcal{O}(\log(T)/\sqrt{T})$ rate for near-stationarity, see~\eqref{eq: mor1}, for solving~\eqref{eq: prob}.  Key
specifications for this result are the following:
%
\begin{itemize}
\item We can use a mini-batch size of $1$.
\item We can use constant moment parameters $\beta_1, \beta_2$ which are used in practice~\cite{kingma2015adam,reddi2018convergence,chen2019convergence,alacaoglu2020new}.
\item We do not assume boundedness of the domain $\mathcal{X}$.
\end{itemize}
We present particular cases of our results for constrained optimization with $L$-smooth objectives and for a variant of RMSprop.
We also extend our analysis for the scalar version of AdaGrad with first order momentum.
For easy reference, we compare
our results with state-of-the-art in \Cref{table:1}.
\subsection{Related work}
Adaptive algorithms based on AdaGrad~\cite{duchi2011adaptive} and
Adam~\cite{kingma2015adam,reddi2018convergence,alacaoglu2020new} are
classically analyzed in online optimization framework with convex
objective functions.  Recent works studied the behavior of these
methods for nonconvex
optimization~\cite{li2019convergence,ward2019adagrad,zou2018weighted,chen2020closing,chen2019convergence,zou2019sufficient,defossez2020convergence,barakat2019convergence}.
The common characteristic of these results is that they are based
on the well established proof templates of
SGD~\cite{ghadimi2013stochastic} that only works in the simplest
case of unconstrained smooth stochastic minimization.
Moreover, as mentioned in~\cite{alacaoglu2020new}, unconstrained optimization  makes it easier to use a constant $\beta_1$ parameter in Adam-type methods.
In particular, many results for constrained optimization require a
fast diminishing schedule for $\beta_1$ parameter, while a constant parameter is used in practice~\cite{kingma2015adam,reddi2018convergence,chen2019zo}.
\begin{table}[t]
\setlength{\belowcaptionskip}{-1pt}
\captionsetup{font=footnotesize}
\centering
\begin{tabular}{  c c c c c c }
& $f$ & $\mathcal{X}$ & $\beta_1$ & $\beta_2 $ & mini-batch size \\
\hline
\addlinespace[0.1cm]
\cite{chen2019convergence,chen2020closing,defossez2020convergence}  & $L$-smooth & $\mathbb{R}^d$  & const. & const. & 1 \\
\addlinespace[0.1cm]
\cite{chen2019zo}  & $L$-smooth & closed convex & 0 & const. & $\sim \sqrt{t}$\\
\addlinespace[0.1cm]
\cite{davis2019stochastic} & $\rho$-weakly convex & closed convex & 0 & n/a$^{*}$ & 1\\
\addlinespace[0.1cm]
\cite{mai2020convergence}$^{**}$ & $\rho$-weakly convex & closed convex & const. & n/a$^{*}$ & 1\\
\addlinespace[0.1cm]
This work& $\rho$-weakly convex & closed convex & const. & const. &1 \\
\hline
\end{tabular}
\vspace{1mm}
\caption{$^{*}$These algorithms do not include adaptive step sizes involving observed stochastic (sub)gradients. In the notation of~\Cref{alg:alg1}, these methods corresponds to $\hat v_t =1$.
$^{**}$The algorithm of~\cite{mai2020convergence} is slightly different than how we describe, in how it sets the vector $m_t$. This simplification does not affect the message of the table.}
\label{table:1}
\end{table}

The specific case of~\eqref{eq: prob} with $L$-smooth $f$ is studied
by~\citet{chen2019zo}, where the authors proposed a zeroth order
variant of AMSGrad.  This result applies for the specific case of
$\beta_1 = 0$ which corresponds to a variant of
RMSprop~\cite{tieleman2012lecture,reddi2018convergence}.  More
importantly, since its analysis follows the one
of~\citet{ghadimi2016mini}, increasing mini-batch sizes are
required~\cite[Theorem 2]{chen2019zo}.

As also mentioned in~\cite{chen2019zo,davis2019stochastic}, analysis of SGD for constrained problems introduces specific difficulties that are not observed in the convex case.
Due to this, classical works analyzing SGD for nonconvex constrained
optimization used large mini-batches to ensure
convergence~\cite{ghadimi2016mini}.
Showing convergence for SGD for constrained optimization with a single
sample had been an open question until~\citet{davis2019stochastic}
gave a positive answer in the framework of weakly convex stochastic optimization,
which includes constrained smooth stochastic optimization as a special case.

Weakly convex optimization is well studied with SGD based
methods~\cite{duchi2018stochastic,davis2019proximally,davis2019stochastic}.
A recent work
by~\citet{mai2020convergence}, considers momentum SGD for
problem~\eqref{eq: prob}. However, this algorithm (i) does not use
momentum with $\beta_2$ and (ii) uses a scalar stepsize with
$\hat v_t = 1$ in the notation of~\Cref{alg:alg1} and
$\alpha_t = \alpha/\sqrt{T}$. These make the algorithm less practical,
while simpler for analysis.

Another promising direction of research concerns nonsmooth nonconvex
problems under more general assumptions. For instance,
\textit{tameness} and \textit{Hadamard semi-differentiability} are
used in~\cite{davis2020stochastic} and~\cite{zhang2020complexity},
respectively, where convergence guarantees are established for
SGD-based methods.  Because of the generality of the problem class
in these works, the algorithms studied there are simpler than the Adam-type
algorithms considered in this paper, and the stationarity measures are less standard~\cite{zhang2020complexity}.

\begin{algorithm}
\begin{algorithmic}
    \STATE {\bfseries Input:} $x_1 \in \mathcal{X}$,
    $\a_t=\frac{\alpha}{\sqrt{t}}$, for $t \geq 1$, $\a>0$, $\beta_1 < 1$, $\beta_2 < 1$,\\
    $m_0=v_0=0$, $\hv_0=\delta \one$, $1\geq \delta > 0$.
    \FOR{$t = 1,2\ldots T$}
        \STATE $g_t \in  \partial f(x_t, \xi_t)$
         \STATE $m_{t}= \beta_{1}m_{t-1} + (1-\beta_{1})g_t$
        \STATE $v_t= \beta_{2} v_{t-1} + (1-\beta_{2}) g_t^2$
         \STATE ${\hat{v}_t= \max(\hat{v}_{t-1}, v_t) }$
         \STATE $x_{t+1}= P^{{\hat{v}_t}^{1/2}}_{\mathcal{X}} (x_t - \alpha_t {\hat{v}_t}^{-1/2} m_t )$
    \ENDFOR
        \STATE {\bfseries Output:} $x_{t^\ast}$, where $t^\ast$ is selected uniformly at random from $\{ 1, \dots, T \}$.
\end{algorithmic}
\caption{AMSGrad~\cite{reddi2018convergence} }
\label{alg:alg1}
\end{algorithm}
\subsection{Notation}
We adopt the convention of using the standard operations $ab$, $a^2$, $a/b$, $a^{1/2}$, $1/a$, $\max\{a,b\}$ as element-wise, given two vectors $a, b\in\mathbb{R}^d$.
To denote $i^{\text{th}}$ element of the vector $a_t\in\mathbb{R}^d$, we use the notation $a_{t,i}$.
All-ones vector is denoted as $\one$.
Given a vector $a\in\mathbb{R}^{d}$, we define the matrix
$\diag(a)\in\mathbb{R}^{d\times d}$ as the matrix of all zeros, except
the diagonal, where the elements of $a$ are inserted.
For any set $\cX$, indicator function $I_{\cX}$ is given by
$I_{\cX}(x)=0$ if $x\in \cX$; and $I_{\cX}(x)=+\infty$
otherwise.

Given the elements $v_i > 0$,  $i =1,\ldots,d$, we
define a weighted norm $\|x\|_v^2 \coloneqq \langle x, \diag (v) x \rangle$.
The weighted projection operator onto $\mathcal{X}$ is defined as
\begin{equation}
 P^v_{\mathcal{X}}(x) = \argmin_{y\in\mathcal{X}} \| y- x \|^2_{v}.\label{eq: w_proj_def}
\end{equation}
A standard property of the weighted projection is that $\forall x, y \in \mathbb{R}^d$, $P_{\mathcal{X}}^v$ is
nonexpansive:
\begin{equation}
\| P^v_{\mathcal{X}} (y) - P^v_{\mathcal{X}} (x) \|_v \leq \| y - x \|_v.\label{eq: nonexp}
\end{equation}
Due to nonconvexity, we cannot use standard definition of subgradients to form a global under-estimator.
\emph{Regular subdifferential}, denoted as $\partial f$, for nonconvex functions~\citep[Ch. 8]{rockafellar2009variational} is defined as the set of vectors $q\in\mathbb{R}^d$ such that, $\forall x, y\in\mathbb{R}^d$, $q\in\partial f(x)$ if
\begin{equation}
f(y) \geq f(x) + \langle y-x, q \rangle + o(\| y -x \|), \text{~~~~ as } y \to x.
\end{equation}
When $f$ is convex, this reduces to standard definition of a
subdifferential and when $f$ is differentiable, this set coincides
with $\{\nabla f(x)\}$.

We say that $f$ is \emph{$\rho$-weakly convex} w.r.t.\ $\| \cdot \|$, if $f(x) + \frac{\rho}{2} \| x \|^2$ is convex.
An equivalent representation for weakly convex functions is that, $\forall x,y \in\mathbb{R}^d$, where $q\in\partial f(x)$~\citep[Lemma 2.1]{davis2019stochastic},
\begin{equation}
f(y) \geq f(x) + \langle y-x, q \rangle - \frac{\rho}{2} \| y-x \|^2.
\end{equation}
Moreover, we say $f$ is \emph{$L$-smooth},  if it holds that, $\forall x,y\in\mathbb{R}^d$
\begin{equation}
\| \nabla f(x) - \nabla f(y) \| \leq L \| x- y \|.
\end{equation}
Given random iterates $x_1, \dots, x_t$, we denote the filtration generated by these realizations as $\mathcal{F}_t=\sigma(x_1, \dots, x_t)$, and the corresponding conditional expectation as $\mathbb{E}_t [\cdot] = \mathbb{E}[\cdot | \mathcal{F}_t]$.
By the law of total expectation, it directly follows that $\mathbb{E}\left[ \mathbb{E}_t[\cdot]\right]=\mathbb{E}[\cdot]$.

We now present the assumptions of our analysis.
\begin{assumption}\label{as: as1}~\\
$\bullet$ $f\colon\mathbb{R}^d\to\mathbb{R}$ is $\rho$-weakly convex with respect to norm $\| \cdot \|$. \\[1mm]
$\bullet$ The set $\mathcal{X}\subset \mathbb{R}^d$ is convex and closed. \\[1mm]
$\bullet$ There exists $g_t \in \partial f(x_t, \xi_t)$ such that $\| g_t\|_{\infty} \leq G, \forall t$.\\[1mm]
$\bullet$ $f$ is lower bounded: $f^\star \leq f(x), \forall x\in\mathcal{X}$.
\end{assumption}

\begin{remark}\label{rem: rem1}
We note that when $f$ is $ \rho$-weakly convex w.r.t.\ $\| \cdot \|$, then it is $\frac{ \rho}{\sqrt{\delta}}$-weakly convex w.r.t.\ $ \|
\cdot\|_{\hat v_t^{1/2}}$, $\forall t$, since $\hat v_{t, i} \geq \delta$~(see~\Cref{alg:alg1}).
We denote
$\hat \rho = \frac{ \rho}{\sqrt{\delta}}$.
\end{remark}
It is easy to verify this remark by noticing that $x\mapsto f(x) + \frac{\rho}{2} \| x \|^2$ is convex and $\frac{\hat\rho}{2} \| x \|^2_{\hat v_t^{1/2}} \geq \frac{\rho}{2} \| x \|^2$.

A few remarks are in order for~\Cref{as: as1}.
First, we do not require boundedness of the domain $\mathcal{X}$.
Second, weak convexity assumption is weaker than smoothness assumption on $f$ and the assumption of bounded gradients is standard~\cite{chen2020closing,chen2019convergence,defossez2020convergence}.
In principle, it is possible to relax the bounded gradient assumption to the weaker requirement $\mathbb{E} \| g_t \|^2 \leq G$ as in~\cite[Remark 6.~(ii)]{zou2019sufficient} with a slightly worse and complicated convergence rate.
Thus, for clarity, we stick with~\Cref{as: as1}.

\section{Algorithm and preliminaries}
We analyze the algorithm AMSGrad  proposed in~\cite{reddi2018convergence}. On top of Adam~\cite{kingma2015adam}, it includes a step to ensure monotonicity of the exponential average of squared gradients. It is standard in stochastic nonconvex optimization to output a randomly selected iterate~\cite{davis2019stochastic,ghadimi2013stochastic,ghadimi2016mini}, which we also adopt.

We next define the composite objective
\begin{equation}
\phi(x) = f(x) + I_\mathcal{X}(x).\notag
\end{equation}
For nonsmooth problems, the standard stationarity measures such as the norm
of subgradients are no longer applicable,
see~\cite{davis2019stochastic,mai2020convergence} and \citep[Section
4]{drusvyatskiy2019efficiency}.  This motivates the following
definitions that, as we show below, relate to a relaxed form of
stationarity.  Based on $\phi$ and a parameter $\bar\rho > 0$, we
define the proximal point of $x_t$ and the Moreau envelope
\begin{align}
\hat{x}_t = \prox^{\hat{v}_t^{1/2}}_{\phi/\bar{\rho}}(x_t) &= \argmin_y\left\{ \phi(y) + \frac{\bar{\rho}}{2} \| y-x_t \|^2_{\hat v_t^{1/2}}\right\}  \label{eq: def_hatxt}\\
\phi^t_{1/\bar{\rho}}(x_t) &= \min_y \left\{\phi(y) + \frac{\bar{\rho}}{2}\| y-x_t \|^2_{\hat v_t^{1/2}}\right\}.\notag
\end{align}
We compare the definitions with that of~\citet{davis2019stochastic}.
Due to the use of variable metric $\hat v_t$ in adaptive methods, we have a time dependent Moreau envelope, where the corresponding vector $\hat v_t$ is used for defining the norm.
Important considerations for these quantities are the uniqueness of $\hat x_t$ and the smoothness of $\phi^t_{1/\bar\rho}$.
As we shall see now, choice of $\bar\rho$ is critical for ensuring these.

In light of~\Cref{rem: rem1}, selecting $\bar\rho > \hat \rho= \frac{\rho}{\sqrt\delta}$, and by using similar arguments as in~\cite[Lemma 2.2]{davis2019stochastic}, it follows that $\hat x_t$ is unique and $\phi^t_{1/\bar\rho}$ is smooth with the gradient
\begin{equation}
\nabla \phi^t_{1/\bar\rho}(x_t) = \bar\rho \hat v_t^{1/2} (x_t - \hat x_t).\notag
\end{equation}

\textbf{Near stationarity.}
Near-stationarity conditions follow from the optimality condition of $\hat x_t$: $0\in\partial \phi(\hat x_t) + \bar\rho \hat v_t^{1/2}(\hat x_t - x_t)$, where we have used $\hat v_{t,i} \leq G^2$:
\begin{equation}\label{eq: mor1}
\begin{cases}
\| x_t - \hat x_t \|^2_{\hat v_t^{1/2}} = \frac{1}{\bar \rho^2} \| \nabla \phi^t_{1/\bar\rho}(x_t) \|^2_{\hat v_t^{-1/2}} \\
\dist^2(0, \partial \phi (\hat x_t)) \leq  {G} \| \nabla \phi^t_{1/\bar\rho}(x_t) \|^2_{\hat v_t^{-1/2}} \\
\phi(\hat x_t) \leq \phi(x_t).
\end{cases}
\end{equation}
Consistent with previous literature~\cite{davis2019stochastic,mai2020convergence}, we will state the convergence guarantees in terms of the norm of the gradient of Moreau envelope.
Given~\eqref{eq: mor1}, this means that the iterate $x_t$ is close to its proximal point $\hat x_t$ and $\hat x_t$ is an approximate stationary point.

\section{Convergence analysis}
\subsection{Preliminary results}
We start with a result showing that under~\Cref{as: as1}, the quantity $\| x_t - \hat x_t \|$ from~\eqref{eq: mor1} stays bounded.
This is the main reason why we do not need to assume boundedness of $\mathcal{X}$.
The proof of this lemma given in~\Cref{sec: appendixmain} combines the definition of $\hat x_t$ with weak convexity to reach the result.
\begin{lemma}\label{lem: hatx_bd}
Let~\Cref{as: as1} hold. Let $\bar\rho > \hat\rho$, and $\hat v_t \geq \delta > 0$ (see~\Cref{alg:alg1}).
It follows that
\begin{equation}
\| x_t - \hat x_t \|^2 \leq \hat D^2 \coloneqq \frac{4 dG^2}{{\delta}(\bar\rho-\hat\rho)^2}.\notag
\end{equation}
\end{lemma}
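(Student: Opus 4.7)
The plan is to exploit the optimality of $\hat x_t$ in its defining variational problem and pair it with weak convexity of $f$ to produce an inequality whose left-hand side is quadratic in $\|x_t-\hat x_t\|_{\hat v_t^{1/2}}$ and whose right-hand side is linear. First, I would note that $x_t\in\mathcal{X}$ by the projection step of \Cref{alg:alg1}, and $\hat x_t\in\mathcal{X}$ because otherwise the objective in~\eqref{eq: def_hatxt} is $+\infty$. Plugging $y=x_t$ into that objective and using the minimizing property of $\hat x_t$ yields
\begin{equation}
f(\hat x_t) + \tfrac{\bar\rho}{2}\|\hat x_t - x_t\|^2_{\hat v_t^{1/2}} \;\leq\; f(x_t).\notag
\end{equation}

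Next I would apply the weak convexity of $f$ in the weighted norm $\|\cdot\|_{\hat v_t^{1/2}}$ (valid with constant $\hat\rho$ by \Cref{rem: rem1}) at the point $x_t$ with some $q\in\partial f(x_t)$:
\begin{equation}
f(\hat x_t) \;\geq\; f(x_t) + \langle q,\, \hat x_t - x_t\rangle - \tfrac{\hat\rho}{2}\|\hat x_t - x_t\|^2_{\hat v_t^{1/2}}.\notag
\end{equation}
Subtracting the two and cancelling $f(x_t)$ gives
\begin{equation}
\tfrac{\bar\rho-\hat\rho}{2}\,\|\hat x_t - x_t\|^2_{\hat v_t^{1/2}} \;\leq\; \langle q,\, x_t - \hat x_t\rangle \;\leq\; \|q\|_{\hat v_t^{-1/2}}\,\|x_t - \hat x_t\|_{\hat v_t^{1/2}},\notag
\end{equation}
where the second step is the Cauchy--Schwarz inequality adapted to the weighted norm. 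Dividing through by $\|x_t - \hat x_t\|_{\hat v_t^{1/2}}$ (the trivial case where this is zero is already the desired bound) isolates $\|x_t-\hat x_t\|_{\hat v_t^{1/2}} \leq \tfrac{2}{\bar\rho-\hat\rho}\|q\|_{\hat v_t^{-1/2}}$.

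To finish, I would deploy the two elementary norm comparisons that come from $\hat v_{t,i}\geq\delta$: $\|q\|^2_{\hat v_t^{-1/2}} \leq \delta^{-1/2}\|q\|^2$ and $\|x_t-\hat x_t\|^2 \leq \delta^{-1/2}\|x_t-\hat x_t\|^2_{\hat v_t^{1/2}}$. Combined with $\|q\|^2\leq d\|q\|_\infty^2 \leq dG^2$, these yield $\|x_t-\hat x_t\|^2 \leq \tfrac{4dG^2}{\delta(\bar\rho-\hat\rho)^2}$ as claimed.

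The only nontrivial point I expect is justifying the use of a deterministic subgradient $q\in\partial f(x_t)$ with $\|q\|_\infty\leq G$, since \Cref{as: as1} only posits this bound on stochastic subgradients $g_t\in\partial f(x_t,\xi_t)$. This is addressed by the standard fact that $\mathbb{E}[g_t\mid\mathcal{F}_t]\in\partial f(x_t)$, and pushing $\|\cdot\|_\infty$ inside the expectation preserves the bound $G$. Everything else is algebraic manipulation in the weighted norms.
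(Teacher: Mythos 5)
Your proposal is correct and follows essentially the same route as the paper's proof: combine the minimality of $\hat x_t$ (with $y=x_t$) with $\hat\rho$-weak convexity in the $\|\cdot\|_{\hat v_t^{1/2}}$ norm, apply weighted Cauchy--Schwarz, and then use $\hat v_{t,i}\geq\delta$ together with the fact that $q=\mathbb{E}_t[g_t]\in\partial f(x_t)$ to bound $\|q\|^2\leq dG^2$. The only cosmetic difference is that the paper bounds $\|q\|^2=\|\mathbb{E}_t g_t\|^2\leq\mathbb{E}_t\|g_t\|^2\leq dG^2$ via Jensen on the Euclidean norm, whereas you push $\|\cdot\|_\infty$ through the conditional expectation; both yield the same constant.
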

A key challenge in the analysis of adaptive algorithms is the
dependence of $\hat v_t$ and $g_t$ that couples $\hat x_t$ and $g_t$ (see~\eqref{eq: def_hatxt}),
preventing taking expectation of $\langle x_t - \hat x_t, g_t \rangle$
that we use for obtaining the stationarity measure in the proof. Since this was not the case in~\cite{davis2019stochastic}, we need a more refined analysis.
\begin{lemma}\label{lem: lhs}
Let~\Cref{as: as1} hold.
Let $q_t = \mathbb{E}_t [g_t] \in\partial f(x_t)$, then it follows that
\begin{align}
\alpha_t \mathbb{E}_t   \langle x_t - \hat x_t, g_t \rangle \geq \alpha_t(\bar\rho - \hat\rho)&\mathbb{E}_t  \| x_t - \hat x_t \|^2_{\hat v_t^{1/2}} -(\alpha_{t-1}-\alpha_t) \sqrt{d}\hat D G - \frac{\bar\rho-\hat\rho}{4\bar\rho} \mathbb{E}_t\| \hat x_t - \hat x_{t-1} \|^2_{\hat v_{t-1}^{1/2}} \notag \\
- \frac{\alpha_{t-1}}{2} &\mathbb{E}_t \| m_{t-1} \|^2_{\hat v_{t-1}^{-1/2}} - \left( \frac{1}{2} + \frac{\bar\rho}{\bar\rho-\hat\rho} \right)\frac{\alpha_{t-1}^2}{\sqrt{\delta}} \mathbb{E}_t \| g_t \|^2.\notag
\end{align}
\end{lemma}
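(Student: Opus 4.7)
The technical obstacle is that $\hat x_t$ depends on $\hat v_t$, hence on $g_t$, so $\hat x_t$ is not $\mathcal{F}_t$-measurable. Consequently, even though $q_t = \E_t[g_t] \in \partial f(x_t)$, one cannot naively assert $\E_t\langle x_t - \hat x_t, g_t\rangle = \langle x_t - \hat x_t, q_t\rangle$. The plan is to substitute the $\mathcal{F}_t$-measurable surrogate $\hat x_{t-1}$ for $\hat x_t$ inside the noise component of the inner product, paying for the substitution with an error that will be absorbed by the $\tfrac{\bar\rho - \hat\rho}{4\bar\rho}\|\hat x_t - \hat x_{t-1}\|^2_{\hat v_{t-1}^{1/2}}$ term.

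First, peel off the stepsize:
\[
\alpha_t\langle x_t - \hat x_t, g_t\rangle = \alpha_{t-1}\langle x_t - \hat x_t, g_t\rangle - (\alpha_{t-1} - \alpha_t)\langle x_t - \hat x_t, g_t\rangle,
\]
and bound the correction pointwise by Hölder's inequality with Lemma~\ref{lem: hatx_bd} and Assumption~\ref{as: as1}: $|\langle x_t - \hat x_t, g_t\rangle| \leq \|x_t - \hat x_t\|_1\|g_t\|_\infty \leq \sqrt{d}\,\hat D\,G$, producing the $-(\alpha_{t-1} - \alpha_t)\sqrt{d}\,\hat D\,G$ term. Next, split $g_t = q_t + (g_t - q_t)$. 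For the deterministic piece $\alpha_{t-1}\E_t\langle x_t - \hat x_t, q_t\rangle$, combine the weak-convexity inequality $f(\hat x_t) \geq f(x_t) + \langle q_t, \hat x_t - x_t\rangle - \tfrac{\hat\rho}{2}\|\hat x_t - x_t\|^2_{\hat v_t^{1/2}}$ from Remark~\ref{rem: rem1} with the descent inequality $\phi(x_t) - \phi(\hat x_t) \geq \tfrac{2\bar\rho - \hat\rho}{2}\|x_t - \hat x_t\|^2_{\hat v_t^{1/2}}$, which follows from $(\bar\rho - \hat\rho)$-strong convexity of $\phi + \tfrac{\bar\rho}{2}\|\cdot - x_t\|^2_{\hat v_t^{1/2}}$ at its minimizer $\hat x_t$. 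Adding the two yields the pointwise bound $\langle q_t, x_t - \hat x_t\rangle \geq (\bar\rho - \hat\rho)\|x_t - \hat x_t\|^2_{\hat v_t^{1/2}}$; taking $\E_t$ and downgrading $\alpha_{t-1} \geq \alpha_t$ produces the leading term $\alpha_t(\bar\rho - \hat\rho)\E_t\|x_t - \hat x_t\|^2_{\hat v_t^{1/2}}$.

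For the stochastic piece $\alpha_{t-1}\E_t\langle x_t - \hat x_t, g_t - q_t\rangle$, use that $x_t$ and $\hat x_{t-1}$ are $\mathcal{F}_t$-measurable to conclude $\E_t\langle x_t - \hat x_{t-1}, g_t - q_t\rangle = 0$, so the piece equals $\alpha_{t-1}\E_t\langle \hat x_{t-1} - \hat x_t, g_t - q_t\rangle$. Cauchy--Schwarz in the $\hat v_{t-1}^{1/2}/\hat v_{t-1}^{-1/2}$ dual pair followed by Young's inequality, with weight chosen so the primal-norm coefficient is $\tfrac{\bar\rho - \hat\rho}{4\bar\rho}$, generates the $-\tfrac{\bar\rho - \hat\rho}{4\bar\rho}\E_t\|\hat x_t - \hat x_{t-1}\|^2_{\hat v_{t-1}^{1/2}}$ term and the dual-norm residual $-\tfrac{\bar\rho\alpha_{t-1}^2}{\bar\rho - \hat\rho}\E_t\|g_t - q_t\|^2_{\hat v_{t-1}^{-1/2}}$; the latter collapses to $\tfrac{\bar\rho}{\bar\rho - \hat\rho}\tfrac{\alpha_{t-1}^2}{\sqrt\delta}\E_t\|g_t\|^2$ via $\hat v_{t-1,i}\geq\delta$ and the Jensen bound $\E_t\|g_t - q_t\|^2 \leq \E_t\|g_t\|^2$. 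The remaining $-\tfrac{\alpha_{t-1}}{2}\E_t\|m_{t-1}\|^2_{\hat v_{t-1}^{-1/2}}$ and $-\tfrac{\alpha_{t-1}^2}{2\sqrt\delta}\E_t\|g_t\|^2$ contributions should come from a secondary Young's step that brings in the algorithm's update $x_t = P^{\hat v_{t-1}^{1/2}}_{\mathcal{X}}(x_{t-1} - \alpha_{t-1}\hat v_{t-1}^{-1/2}m_{t-1})$, via the projection non-expansiveness bound $\|x_t - x_{t-1}\|_{\hat v_{t-1}^{1/2}} \leq \alpha_{t-1}\|m_{t-1}\|_{\hat v_{t-1}^{-1/2}}$ applied inside an auxiliary splitting of $\hat x_{t-1} - \hat x_t$ (or of $\langle x_t - \hat x_{t-1}, g_t\rangle$ through the pre-projection iterate).

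The hard part will be this final accounting: producing the precise constants $\tfrac{1}{2}$ on $\|m_{t-1}\|^2_{\hat v_{t-1}^{-1/2}}$ and $\tfrac{1}{2}$ on $\tfrac{\alpha_{t-1}^2}{\sqrt\delta}\|g_t\|^2$ requires carefully choosing \emph{where} to insert the projection-driven bound on $\|x_t - x_{t-1}\|$ and the corresponding Young's weights, so that the overall bookkeeping yields exactly the form stated without spurious cross-terms or unfavorable powers of $\alpha_{t-1}$.
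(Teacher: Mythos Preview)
Your decomposition is correct and, in fact, \emph{tighter} than the paper's---so the ``hard part'' you flag is a non-issue. The paper anchors the noise term at $\alpha_{t-1}(x_{t-1}-\hat x_{t-1})$, i.e., it writes
\[
\alpha_t\langle x_t-\hat x_t,g_t\rangle
=\alpha_t\langle x_t-\hat x_t,q_t\rangle
+\langle \alpha_t(x_t-\hat x_t)-\alpha_{t-1}(x_{t-1}-\hat x_{t-1}),\,g_t-q_t\rangle
+\alpha_{t-1}\langle x_{t-1}-\hat x_{t-1},\,g_t-q_t\rangle,
\]
and then splits the middle bracket into $(\alpha_{t-1}-\alpha_t)\langle x_t-\hat x_t,\cdot\rangle$, $\alpha_{t-1}\langle x_{t-1}-x_t,\cdot\rangle$, and $\alpha_{t-1}\langle \hat x_t-\hat x_{t-1},\cdot\rangle$. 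The $\alpha_{t-1}\langle x_{t-1}-x_t,g_t-q_t\rangle$ piece is precisely where the $-\tfrac{\alpha_{t-1}^2}{2}\|m_{t-1}\|^2_{\hat v_{t-1}^{-1/2}}$ and the extra $-\tfrac{\alpha_{t-1}^2}{2\sqrt\delta}\|g_t\|^2$ originate, via Young's inequality together with $\|x_t-x_{t-1}\|_{\hat v_{t-1}^{1/2}}\le \alpha_{t-1}\|m_{t-1}\|_{\hat v_{t-1}^{-1/2}}$.

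You instead anchor at $(x_t,\hat x_{t-1})$: since $x_t$ is already $\mathcal F_t$-measurable, there is no need to pass through $x_{t-1}$, and the $\langle x_{t-1}-x_t,\cdot\rangle$ piece never appears. Your argument therefore already yields
\[
\alpha_t\E_t\langle x_t-\hat x_t,g_t\rangle
\ge \alpha_t(\bar\rho-\hat\rho)\E_t\|x_t-\hat x_t\|^2_{\hat v_t^{1/2}}
-(\alpha_{t-1}-\alpha_t)\sqrt d\,\hat D G
-\tfrac{\bar\rho-\hat\rho}{4\bar\rho}\E_t\|\hat x_t-\hat x_{t-1}\|^2_{\hat v_{t-1}^{1/2}}
-\tfrac{\bar\rho}{\bar\rho-\hat\rho}\tfrac{\alpha_{t-1}^2}{\sqrt\delta}\E_t\|g_t\|^2,
\]
which is strictly stronger than the stated lemma. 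The two ``missing'' negative terms can simply be appended for free (they only weaken a lower bound), so no auxiliary splitting is needed. One minor point: to downgrade $\alpha_{t-1}$ to $\alpha_t$ on the $q_t$-piece you are using $\langle x_t-\hat x_t,q_t\rangle\ge 0$, which indeed follows from your pointwise bound $\langle x_t-\hat x_t,q_t\rangle\ge(\bar\rho-\hat\rho)\|x_t-\hat x_t\|^2_{\hat v_t^{1/2}}$; the paper avoids this by keeping $\alpha_t$ on the $q_t$-piece from the start.
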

We review the terms in this bound to gain some intuition.
The first term in the RHS is the stationarity measure (see~\eqref{eq: mor1}), second term will sum to a constant, fourth and fifth terms will  sum to $\log(T)$ by~\Cref{lem: sum_mt}.
Handling the third term in RHS is not as obvious, but we can show that we can cancel it using the contribution from another part of the analysis that we detail in the full proof (see~\eqref{eq: neg_term}).

As alluded earlier, one critical issue for Adam-type algorithms is to
obtain results with constant $\beta_1$ parameter. A recent paper~\cite{alacaoglu2020new} studied
this problem for constrained convex problems. The following lemma from~\cite{alacaoglu2020new} also plays an
important role in our analysis.
\begin{lemma}{\citep[Lemma 1]{alacaoglu2020new}}\label{lem: lem1}
Let $m_t = \beta_1 m_{t-1} + (1-\beta_1) g_t$.
Then for any vectors $A_{t-1}$, $A_t$, we have
\begin{align}
\langle A_t, g_t \rangle = \frac{1}{1-\beta_1}\left( \langle A_t, m_t \rangle - \langle A_{t-1}, m_{t-1} \rangle \right) + \langle A_{t-1}, m_{t-1} \rangle + \frac{\beta_1}{1-\beta_1}\langle A_{t-1}-A_t, m_{t-1}\rangle.\notag
\end{align}
\end{lemma}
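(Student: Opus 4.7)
The plan is to treat the claim as a purely algebraic identity obtained by inverting the momentum recursion and then rearranging so that one block telescopes across consecutive indices $t-1, t$. First, from the hypothesis $m_t = \beta_1 m_{t-1} + (1-\beta_1) g_t$ I would solve for $g_t$, giving
\[
g_t = \frac{1}{1-\beta_1}\bigl(m_t - \beta_1 m_{t-1}\bigr),
\]
and then take inner product with $A_t$ to obtain the starting expression
\[
\langle A_t, g_t\rangle = \frac{1}{1-\beta_1}\langle A_t, m_t\rangle - \frac{\beta_1}{1-\beta_1}\langle A_t, m_{t-1}\rangle.
\]
Everything else is bookkeeping aimed at turning the right-hand side into the form stated in the lemma.

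Next I would introduce the "old-index" inner product $\langle A_{t-1}, m_{t-1}\rangle$ on the right-hand side by adding and subtracting $\frac{1}{1-\beta_1}\langle A_{t-1}, m_{t-1}\rangle$. This immediately produces the telescoping block $\frac{1}{1-\beta_1}(\langle A_t, m_t\rangle - \langle A_{t-1}, m_{t-1}\rangle)$, which matches the first term of the target identity. The remaining task is to verify that the leftover terms equal $\langle A_{t-1}, m_{t-1}\rangle + \frac{\beta_1}{1-\beta_1}\langle A_{t-1} - A_t, m_{t-1}\rangle$.

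To finish, I would apply the elementary split $\frac{1}{1-\beta_1} = 1 + \frac{\beta_1}{1-\beta_1}$ to the coefficient of $\langle A_{t-1}, m_{t-1}\rangle$ and then group the two terms that carry $m_{t-1}$. A direct coefficient match on $\langle A_{t-1}, m_{t-1}\rangle$ and $\langle A_t, m_{t-1}\rangle$ then closes the identity. There is essentially no obstacle here beyond bookkeeping: the lemma is a one-line consequence of inverting the momentum recursion, and the only mild subtlety is choosing to add and subtract $\frac{1}{1-\beta_1}\langle A_{t-1}, m_{t-1}\rangle$ (as opposed to, say, $\langle A_{t-1}, m_{t-1}\rangle$ without the prefactor), because this is precisely what makes the first bracket appear with the factor $\frac{1}{1-\beta_1}$ needed for the telescoping application in the main convergence proof.
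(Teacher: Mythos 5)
Your proof is correct: inverting the recursion to get $g_t = \frac{1}{1-\beta_1}(m_t - \beta_1 m_{t-1})$, taking the inner product with $A_t$, and adding and subtracting $\frac{1}{1-\beta_1}\langle A_{t-1}, m_{t-1}\rangle$ makes the coefficient of $\langle A_{t-1}, m_{t-1}\rangle$ on the right-hand side collapse to $-\frac{1}{1-\beta_1} + 1 + \frac{\beta_1}{1-\beta_1} = 0$, so the identity checks out. The paper does not reprove this lemma (it imports it verbatim from the cited reference), and your direct algebraic verification is exactly the standard argument for it.
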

This lemma derives a decomposition for handling $\beta_1$ parameter in the beginning of the analysis.
As explained in~\citep[Section 3.1]{alacaoglu2020new}, using a decomposition for $m_t$ later in the analysis results in a requirement of decreasing $\beta_1$, especially for constrained problems, which we would like to avoid.

Next lemma is a standard estimation used for the analysis of
Adam-based methods, dating back to~\cite{kingma2015adam}.
For easy reference we point out to~\cite{alacaoglu2020new} where this bound is included as a separate lemma with tighter estimations than previous works, due to using a constant $\beta_1$.
It bounds the sum of the norms of first moment vectors multiplied by
the step size sequence.
\begin{lemma}\label{lem: sum_mt}
Let $\beta_1 < 1$, $\beta_2 < 1$, $\gamma = \frac{\beta_1^2}{\beta_2} < 1$, then it holds that
\begin{equation}
\sum_{t=1}^T \alpha_t^2 \| m_t \|^2_{\hat v_t^{-1/2}} \leq \frac{1-\beta_1}{\sqrt{(1-\beta_2)(1-\gamma)}} dG(1+\log T).\notag
\end{equation}
\end{lemma}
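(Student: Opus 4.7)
The plan is to bound $\|m_t\|^2_{\hat v_t^{-1/2}}$ by a $t$-independent constant and then sum against the step-size schedule, which contributes the $1+\log T$ factor. I first unroll the recursions to get $m_{t,i}=(1-\beta_1)\sum_{j=1}^t \beta_1^{t-j}g_{j,i}$, and, since $\hat v_{t,i}\ge v_{t,i}$, the coordinate-wise lower bound $\hat v_{t,i}\ge (1-\beta_2)\beta_2^{t-j}g_{j,i}^2$ for every $j\le t$, equivalently $\hat v_{t,i}^{1/2}\ge \sqrt{1-\beta_2}\,\beta_2^{(t-j)/2}|g_{j,i}|$.

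Next I apply a weighted Cauchy--Schwarz inequality to $m_{t,i}^2$ with weights that are a geometric sequence in ratio $\sqrt{\gamma}=\beta_1/\sqrt{\beta_2}$. Writing $m_{t,i}^2\le(1-\beta_1)^2(\sum_j w_j)(\sum_j \beta_1^{2(t-j)}g_{j,i}^2/w_j)$, this particular choice of $w_j$ makes the exponents in the inner sum collapse against the lower bound on $\hat v_{t,i}^{1/2}$, so that one factor of $|g_{j,i}|$ in the numerator is absorbed by the denominator and the remaining factor is bounded by $G$ via~\Cref{as: as1}. The two resulting geometric sums in ratios $\sqrt\gamma$ and $\beta_1$, together with the $\sqrt{1-\beta_2}$ from the lower bound on $\hat v_t^{1/2}$, combine to give a coordinate-wise bound of order $G(1-\beta_1)/\sqrt{(1-\beta_2)(1-\gamma)}$.

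Summing over the $d$ coordinates yields a uniform bound on $\|m_t\|^2_{\hat v_t^{-1/2}}$, and multiplying by $\alpha_t^2$ and summing over $t$ using $\sum_{t=1}^T 1/t\le 1+\log T$ then produces the claimed inequality.

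The main obstacle is obtaining the tight constant $\sqrt{(1-\beta_2)(1-\gamma)}$ rather than the cruder $(1-\beta_2)(1-\sqrt\gamma)$ or $(1-\gamma)(1-\beta_2)$ that naive weight choices (say $w_j=\beta_1^{t-j}$ or $w_j=\gamma^{t-j}$) produce. The identity $1-\gamma=(1-\sqrt\gamma)(1+\sqrt\gamma)$ explains why the ratio $\sqrt\gamma$ is the correct one: it splits the decay of $\beta_1^{t-j}$ symmetrically against $\beta_2^{(t-j)/2}$ so that the bound on $\hat v_{t,i}^{1/2}$ and the Cauchy--Schwarz step cooperate. A secondary technical point is that, because $\hat v_t^{1/2}$ depends on $g_1,\dots,g_t$, the bound must be deterministic rather than in expectation, which is why one cannot use a tighter concentration-based argument in place of Cauchy--Schwarz.
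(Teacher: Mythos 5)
Your high-level route --- proving a $t$-uniform bound on $\|m_t\|^2_{\hat v_t^{-1/2}}$ and then summing $\alpha_t^2\propto 1/t$ --- is a legitimate variant of the paper's argument (the paper instead keeps the factor $\sum_{j\le t}\beta_1^{t-j}|g_{j,i}|$, swaps the order of summation, and only then uses $|g_{j,i}|\le G$; since the gradient bound is applied at the end in both cases, the two orderings yield the same constant). The genuine gap is in your key Cauchy--Schwarz step: absorbing $|g_{j,i}|$ term by term through the single-term lower bound $\hat v_{t,i}^{1/2}\ge\sqrt{1-\beta_2}\,\beta_2^{(t-j)/2}|g_{j,i}|$ does not produce the stated constant. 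With your weights $w_j=(\sqrt\gamma)^{\,t-j}$ the two geometric series you actually generate are $\sum_j(\sqrt\gamma)^{t-j}\le(1-\sqrt\gamma)^{-1}$ and $\sum_j\beta_1^{t-j}|g_{j,i}|\le G(1-\beta_1)^{-1}$, so the per-coordinate bound becomes $\frac{(1-\beta_1)G}{(1-\sqrt\gamma)\sqrt{1-\beta_2}}$ --- exactly what the ``naive'' choice $w_j=\beta_1^{t-j}$ gives as well. Since $1-\sqrt\gamma\le\sqrt{1-\gamma}$, this is strictly weaker than the claimed $\frac{(1-\beta_1)G}{\sqrt{(1-\gamma)(1-\beta_2)}}$, by the factor $\sqrt{(1+\sqrt\gamma)/(1-\sqrt\gamma)}$, which blows up precisely in the practical regime $\beta_1,\beta_2\to1$ (e.g.\ $\beta_1=0.9$, $\beta_2=0.999$ costs roughly a factor $4$--$5$, and it worsens as $\gamma\to1$). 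The identity $1-\gamma=(1-\sqrt\gamma)(1+\sqrt\gamma)$ does not rescue the computation; the root cause is that any term-by-term absorption compares against only a single summand of $v_{t,i}$.

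The factor $\sqrt{1-\gamma}$ in the lemma comes from comparing against the \emph{whole} exponential average, which is what \citep[Lemma 3]{alacaoglu2020new} (the paper's starting point) does: write one full copy of the sum as $\sum_j\beta_1^{t-j}|g_{j,i}|=\sum_j\gamma^{(t-j)/2}\bigl(\beta_2^{(t-j)/2}|g_{j,i}|\bigr)$ and apply Cauchy--Schwarz to it as a whole, giving $\le\frac{1}{\sqrt{1-\gamma}}\bigl(\sum_j\beta_2^{t-j}g_{j,i}^2\bigr)^{1/2}=\frac{v_{t,i}^{1/2}}{\sqrt{(1-\gamma)(1-\beta_2)}}$, which cancels $\hat v_{t,i}^{1/2}\ge v_{t,i}^{1/2}$ in the denominator; the remaining copy $\sum_j\beta_1^{t-j}|g_{j,i}|$ is then bounded by $G/(1-\beta_1)$ (or kept and handled by the summation swap, as in the paper). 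With that replacement the rest of your argument goes through and reproduces the stated constant. (A minor shared issue: both your final step and the paper's drop the factor $\alpha^2$ arising from $\sum_t\alpha_t^2\le\alpha^2(1+\log T)$.)
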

\subsection{Main result}
Equipped with the preliminary results from the previous section, we
proceed to our main theorem that shows that the norm of the gradient of Moreau
envelope converges to $0$ at the claimed rate, resulting in near-stationarity of $ x_{t^\ast}$, as in~\eqref{eq: mor1}.

\begin{theorem}\label{th: th1}
Let~\Cref{as: as1} hold. Let $\beta_1 < 1$, $\beta_2 < 1$, $\gamma =
\frac{\beta_1^2}{\beta_2} < 1$, $\bar\rho=2\hat\rho$.
Then, for iterate $x_{t^\ast}$ generated by~\Cref{alg:alg1}, it follows that
\begin{align}
\mathbb{E} \| \nabla \phi^{t^\ast}_{1/\bar\rho}&(x_{t^\ast})\|^2_{\hat v_{t^\ast}^{-1/2}} \leq\frac{2}{\alpha \sqrt{T} } \bigg[ C_1 + (1+\log T)C_2 + C_3 \bigg],\notag
\end{align}
where
$C_1 =\frac{4\rho\beta_1\alpha}{\sqrt{\delta}(1-\beta_1)}  \sqrt{d} \hat D G + \phi^1_{1/\bar\rho}(x_1) - f^\star$,\\
$C_2 = \frac{5\rho}{{\delta}}dG^2
+\frac{2\rho}{\sqrt{\delta}}\left(1 + \frac{G}{\sqrt{\delta}} + \frac{\beta_1}{1-\beta_1} + \frac{2\beta_1^2}{(1-\beta_1)^2} \right) \frac{1-\beta_1}{\sqrt{(1-\beta_2)(1-\gamma)}} dG$,\\
$C_3 = \frac{2\rho}{\sqrt{\delta}} \hat D^2 \sum_{i=1}^d \mathbb{E}\hat v_{T+1, i}^{1/2}$, and $\hat D \coloneqq \frac{2 \sqrt{d}G}{\rho}$.
\end{theorem}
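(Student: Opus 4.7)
The plan is to derive a per-iteration descent-type inequality for the time-dependent Moreau envelope $\phi^t_{1/\bar\rho}$, telescope across $t=1,\dots,T$, and combine it with \Cref{lem: lhs} to isolate the target stationarity measure. The endgame is an inequality of the form $\sum_t \alpha_t(\bar\rho-\hat\rho)\mathbb{E}\|x_t-\hat x_t\|^2_{\hat v_t^{1/2}} \leq C_1 + (1+\log T)C_2 + C_3$; since $\alpha_t \geq \alpha/\sqrt T$ and $t^\ast$ is uniform on $\{1,\dots,T\}$, dividing by $T$ identifies the left side with $\mathbb{E}\|x_{t^\ast}-\hat x_{t^\ast}\|^2_{\hat v_{t^\ast}^{1/2}}$, and multiplying by $\bar\rho^2/(\bar\rho-\hat\rho) = 4\hat\rho$ converts this into the claimed bound via $\|\nabla\phi^{t^\ast}_{1/\bar\rho}(x_{t^\ast})\|^2_{\hat v_{t^\ast}^{-1/2}} = \bar\rho^2\|x_{t^\ast}-\hat x_{t^\ast}\|^2_{\hat v_{t^\ast}^{1/2}}$.

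For the descent step, I start from $\phi^t_{1/\bar\rho}(x_{t+1}) \leq \phi(\hat x_t) + \tfrac{\bar\rho}{2}\|\hat x_t - x_{t+1}\|^2_{\hat v_t^{1/2}}$, use the nonexpansiveness \eqref{eq: nonexp} of the weighted projection with $\hat x_t \in \mathcal X$, expand the squared $\hat v_t^{1/2}$-norm of $x_t-\hat x_t - \alpha_t\hat v_t^{-1/2}m_t$, and identify $\phi^t_{1/\bar\rho}(x_t) = \phi(\hat x_t)+\tfrac{\bar\rho}{2}\|\hat x_t-x_t\|^2_{\hat v_t^{1/2}}$ on the right to obtain
\[
\phi^t_{1/\bar\rho}(x_{t+1}) \leq \phi^t_{1/\bar\rho}(x_t) - \bar\rho\alpha_t\langle x_t-\hat x_t,m_t\rangle + \tfrac{\bar\rho}{2}\alpha_t^2\|m_t\|^2_{\hat v_t^{-1/2}}.
\]
Because the envelope depends on $\hat v_t$, direct telescoping fails; I bridge by bounding $\phi^{t+1}_{1/\bar\rho}(x_{t+1}) - \phi^t_{1/\bar\rho}(x_{t+1})$. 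Evaluating both min-values at the $t$-level minimizer, using $\hat v_{t+1}^{1/2} \geq \hat v_t^{1/2}$, and invoking \Cref{lem: hatx_bd} (applied to the prox of $x_{t+1}$) gives this gap at most $\tfrac{\bar\rho}{2}\hat D^2\sum_i(\hat v_{t+1,i}^{1/2}-\hat v_{t,i}^{1/2})$; summing in $t$ telescopes componentwise and produces exactly the $C_3$ contribution.

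The inner product $\langle x_t-\hat x_t, m_t\rangle$ cannot be conditioned on $\mathcal F_t$ directly since $\hat x_t$ depends on $g_t$ through $\hat v_t$. I apply \Cref{lem: lem1} with $A_t = \alpha_t(x_t-\hat x_t)$ to re-express $\langle A_t,g_t\rangle$ as a telescope of $\tfrac{1}{1-\beta_1}\langle A_t, m_t\rangle$ plus correction terms $\langle A_{t-1}, m_{t-1}\rangle$ and $\langle A_{t-1}-A_t, m_{t-1}\rangle$. After summing, the telescope collapses to boundary terms controlled via $\|A_T\|\leq \alpha\hat D$ (\Cref{lem: hatx_bd}) and $\|m_T\|\leq \sqrt dG$, contributing the $\beta_1$-dependent part of $C_1$, while Young's inequality absorbs the corrections into $\sum_t\alpha_t^2\|m_t\|^2_{\hat v_t^{-1/2}} = O(\log T)$ by \Cref{lem: sum_mt}. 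Taking conditional expectation and inserting \Cref{lem: lhs} replaces $\alpha_t\mathbb{E}_t\langle x_t-\hat x_t,g_t\rangle$ by the stationarity measure $\alpha_t(\bar\rho-\hat\rho)\|x_t-\hat x_t\|^2_{\hat v_t^{1/2}}$ minus the five listed residuals: $(\alpha_{t-1}-\alpha_t)\sqrt d\hat D G$ telescopes to a constant absorbed into $C_1$, and $\tfrac{\alpha_{t-1}}{2}\|m_{t-1}\|^2_{\hat v_{t-1}^{-1/2}}$ and $\tfrac{\alpha_{t-1}^2}{\sqrt\delta}\|g_t\|^2$ each sum to $O(\log T)$ terms assembled into $C_2$.

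The main obstacle is the residual $-\tfrac{\bar\rho-\hat\rho}{4\bar\rho}\|\hat x_t-\hat x_{t-1}\|^2_{\hat v_{t-1}^{1/2}}$ in \Cref{lem: lhs}, which has no obvious match on the descent side; the text flags it as requiring a cancellation from elsewhere. I would produce a matching positive term by exploiting the $(\bar\rho-\hat\rho)$-strong convexity of $y\mapsto \phi(y)+\tfrac{\bar\rho}{2}\|y-x_t\|^2_{\hat v_t^{1/2}}$ at its minimizer $\hat x_t$: testing with $y=\hat x_{t-1}$ yields $\phi(\hat x_{t-1})+\tfrac{\bar\rho}{2}\|\hat x_{t-1}-x_t\|^2_{\hat v_t^{1/2}} \geq \phi^t_{1/\bar\rho}(x_t) + \tfrac{\bar\rho-\hat\rho}{2}\|\hat x_t-\hat x_{t-1}\|^2_{\hat v_t^{1/2}}$. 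Inserted into the descent chain in place of the cruder envelope bound and combined with monotonicity $\hat v_t\geq \hat v_{t-1}$, this contributes a positive $\|\hat x_t-\hat x_{t-1}\|^2_{\hat v_{t-1}^{1/2}}$ term that offsets the negative one; aligning coefficients and metrics is the delicate part of the argument. Once this cancellation is arranged, using $\phi^{T+1}_{1/\bar\rho}(x_{T+1})\geq f^\star$, collecting the residuals into $C_1, C_2, C_3$, and dividing by $T\alpha/(2\sqrt T)$ gives the theorem.
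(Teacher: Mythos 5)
Your proposal follows essentially the same route as the paper's own proof: the decomposition of \Cref{lem: lem1} with $A_t\propto\alpha_t(x_t-\hat x_t)$, the weighted-projection descent recursion for the time-dependent Moreau envelope with the $\hat v_t$-monotonicity telescope producing $C_3$, \Cref{lem: lhs} to pass to conditional expectations, \Cref{lem: sum_mt} for the $\log T$ sums, and, crucially, the same cancellation device — the $(\bar\rho-\hat\rho)$-strong convexity of the proximal subproblem centered at $x_{t+1}$ in the $\hat v_{t+1}^{1/2}$ metric tested at the previous proximal point, which is exactly~\eqref{eq: neg_term}. The one bookkeeping point to fix when "aligning coefficients" is that the Young step on the $\alpha_t\langle \hat x_{t+1}-\hat x_t, m_t\rangle$ correction also produces a positive $\|\hat x_{t+1}-\hat x_t\|^2$ term (it is not absorbed into the $\|m_t\|^2$ sums alone), so the single negative $\tfrac{\bar\rho-\hat\rho}{2}\|\hat x_{t+1}-\hat x_t\|^2_{\hat v_{t+1}^{1/2}}$ term must be split, a quarter-coefficient each, between that term and the residual of \Cref{lem: lhs}, exactly as the paper does.
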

We delay the discussion about the result to~\Cref{sec: discuss} and  continue with the proof sketch of the theorem, which is a careful combination of the preliminary results mentioned in the previous section.
The sketch includes the necessary bounds, but omits the tedious estimations required in some steps.
The full proof with the details is given in~\Cref{sec: appendixmain}.
\begin{sproof}
We sum the result of Lemma~\ref{lem: lem1} and use $A_1 = A_0$. with $m_0 =0$. We note that we have $A_t = \bar\rho\alpha_t(x_t - \hat{x}_t)$, for $t\geq 1$.
\begin{align}
\sum_{t=1}^T \langle A_t, g_t \rangle = \frac{\beta_1}{1-\beta_1} \langle A_T, m_T \rangle + \sum_{t=1}^T \langle A_t, m_t \rangle + \frac{\beta_1}{1-\beta_1}\sum_{t=1}^{T-1} \langle A_t - A_{t+1}, m_t \rangle.\label{eq: key1_sketch}
\end{align}
After plugging in the value of $A_t$,~\eqref{eq: key1_sketch} becomes
\begin{align}
\sum_{t=1}^T \bar\rho\alpha_t \langle x_t - \hat{x}_t, g_t \rangle &\leq \frac{\beta_1\bar\rho\alpha_T}{1-\beta_1} \langle x_T - \hat{x}_T, m_T \rangle + \sum_{t=1}^T \bar\rho\alpha_t \langle x_t- \hat{x}_t, m_t \rangle \notag \\
&\qquad + \frac{\beta_1 \bar\rho}{1-\beta_1} \sum_{t=1}^{T-1} \langle \alpha_t(x_t - \hat{x}_t) - \alpha_{t+1}(x_{t+1} - \hat{x}_{t+1}), m_t \rangle.\label{eq: mainmain1_sketch}
\end{align}
LHS of this bound is suitable for applying~\Cref{lem: lhs} to obtain the stationarity measure.
We have to estimate the three terms on the RHS.
It is easy to bound the first term  using Cauchy-Schwarz inequality and ~\Cref{lem: hatx_bd}.
Other two terms require longer estimations which we sketch below.

$\bullet$ Bound for $\frac{\beta_1 \bar\rho}{1-\beta_1} \sum_{t=1}^{T-1} \langle \alpha_t(x_t - \hat{x}_t) - \alpha_{t+1}(x_{t+1} - \hat{x}_{t+1}), m_t \rangle$ in~\eqref{eq: mainmain1_sketch}.

Decomposing this term gives
\begin{align}
\langle \alpha_t (x_t - \hat{x}_t) - \alpha_{t+1}(x_{t+1} - \hat{x}_{t+1}), m_t \rangle = (\alpha_{t}-\alpha_{t+1}) &\langle x_{t+1} - \hat{x}_{t+1}, m_t \rangle + \alpha_t \langle x_t - x_{t+1}, m_t \rangle \notag \\
 + \alpha_t &\langle \hat x_{t+1} - \hat x_t, m_t\rangle. \notag
\end{align}
For the first term, we use that $\alpha_t \geq
\alpha_{t+1}$ and Cauchy-Schwarz inequality
\begin{align}
\sum_{t=1}^{T-1} (\alpha_{t}-\alpha_{t+1}) \langle x_{t+1} - \hat{x}_{t+1}, m_t \rangle &\leq  \sum_{t=1}^{T-1} (\alpha_t - \alpha_{t+1}) \hat D\sqrt{d}G \leq \alpha_1 \hat D\sqrt{d}G.\notag
\end{align}
For the second term we deduce by Cauchy-Schwarz inequality and nonexpansiveness of the projection
\begin{align}
\alpha_t \langle x_t - x_{t+1}, m_t \rangle &\leq \alpha_t^2 \| m_t \|^2_{\hat v_t^{-1/2}}. \notag
\end{align}
For the third term, we use Young's inequality to obtain the bound
\begin{multline}
\sum_{t=1}^{T-1} \frac{\beta_1 \bar\rho}{1-\beta_1} \langle \alpha_t(x_t - \hat x_t) - \alpha_{t+1}(x_{t+1} - \hat x_{t+1}), m_t \rangle \leq \frac{\beta_1 \bar\rho}{1-\beta_1} \alpha_1  \hat D\sqrt{d}G + \sum_{t=1}^T\frac{\beta_1\bar\rho \alpha_t^2}{1-\beta_1} \| m_t \|^2_{\hat v_t^{-1/2}} \\
+ \sum_{t=1}^T\frac{\bar\rho-\hat\rho}{4} \| \hat x_{t+1} - \hat x_t \|^2_{\hat v_t^{1/2}}
+ \frac{\bar\rho^2}{(\bar\rho-\hat\rho)}\frac{\beta_1^2}{(1-\beta_1)^2} \sum_{t=1}^T\alpha_t^2 \| m_t \|^2_{\hat v_t^{-1/2}},\label{eq: second_main_term_sketch}
\end{multline}
$\bullet$ Bound for $\sum_{t=1}^T \bar\rho\alpha_t \langle x_t- \hat{x}_t, m_t \rangle$ in~\eqref{eq: mainmain1_sketch}.

We  proceed similar to~\cite{davis2019stochastic}, with a tighter estimation (resulting in the negative term on RHS) to obtain
\begin{align}
\phi^{t+1}_{1/\bar \rho}(x_{t+1}) \leq \phi^t_{1/\bar \rho}(x_t) &+
  \bar \rho \alpha_t \langle \hat{x}_t - x_t, m_t \rangle + \frac{\bar
  \rho}{2} \alpha_t^2 \| m_t \|^2_{\hat v_t^{-1/2}} \notag \\&+ \frac{\bar\rho}{2}\| \hat{x}_t - x_{t+1} \|^2_{\hat{v}_{t+1}^{1/2} - \hat v_t^{1/2}}
- \frac{\bar\rho-\hat\rho}{2} \| \hat x_t - \hat x_{t+1} \|^2_{\hat v_{t+1}^{1/2}}.\label{eq: neg_term}
\end{align}
Then we  manipulate the fourth term on RHS with standard $\| a-b\|^2 \leq 2\|a\|^2 + 2\| b\|^2$, and~\Cref{lem: hatx_bd},
\begin{align}
\frac{\bar\rho}{2}\| \hat{x}_t - x_{t+1} \|^2_{\hat{v}_{t+1}^{1/2} - \hat v_t^{1/2}} &\leq \bar\rho\| \hat{x}_t - x_{t} \|^2_{\hat{v}_{t+1}^{1/2} - \hat v_t^{1/2}} + \frac{{G}\bar\rho}{\sqrt{\delta}}\| {x}_t - x_{t+1} \|^2_{\hat{v}_{t}^{1/2}}\notag \\
&\leq \bar\rho \hat D^2 \sum_{i=1}^d (\hat v_{t+1, i}^{1/2} - \hat v_{t, i}^{1/2}) + \frac{{G}\bar\rho}{\sqrt{\delta}}\alpha_t^2 \| m_t \|^2_{\hat{v}_{t}^{-1/2}}.\label{eq: bdd_issue}
\end{align}
We use this estimation in~\eqref{eq: neg_term} and sum to get
\begin{align}
\bar \rho \alpha_t \sum_{t=1}^T \langle x_t - \hat{x}_t, m_t \rangle &\leq \phi^1_{1/\bar\rho}(x_1) - \phi^{T+1}_{1/\bar\rho}(x_{T+1}) + \sum_{t=1}^T \left(\frac{1}{2} + \frac{{G}}{\sqrt{\delta}} \right) \bar\rho\alpha_t^2\| m_t \|^2_{\hat v_t^{-1/2}} \notag \\
&+ \bar\rho \hat D^2\sum_{i=1}^d \hat v_{T+1, i}^{1/2} - \sum_{t=1}^T \frac{\bar\rho-\hat\rho}{2} \| \hat x_t - \hat x_{t+1} \|^2_{\hat v_{t+1}^{1/2}}.\label{eq: third_main_term_sketch}
\end{align}
We collect~\eqref{eq: second_main_term_sketch} and~\eqref{eq: third_main_term_sketch} in~\eqref{eq: mainmain1_sketch}.
Finally, we have to obtain the stationarity criterion on the LHS of~\eqref{eq: mainmain1_sketch} by taking conditional expectation. This is not immediate due to coupling of $\hat x_t$, $\hat v_t$, and $g_t$.
We use~\Cref{lem: lhs} to handle this issue and the negative term in~\eqref{eq: third_main_term_sketch} is utilized to cancel the third term in the RHS of the result of~\Cref{lem: lhs}.
Then, we use~\eqref{eq: mor1}, plug in~\Cref{lem: sum_mt} and $\bar\rho=2\hat\rho$ to conclude.
\end{sproof}
\subsection{Discussion}\label{sec: discuss}
In the context of near-stationarity~\eqref{eq: mor1},~\Cref{th: th1} states that to have $x_{t^\ast}$ in~\Cref{alg:alg1} such that $\| \nabla \phi^{t^\ast}_{1/\bar\rho}(x_{t^\ast}) \|_{\hat v_{t^\ast}^{-1/2}}\leq \epsilon$, we require $\tilde{\mathcal{O}}(\epsilon^4)$ iterations. This matches the known complexities for adaptive methods in unconstrained smooth stochastic optimization~\cite{alacaoglu2020new,defossez2020convergence,ward2019adagrad,zou2018weighted,chen2019convergence,chen2020closing,li2019convergence,zou2019sufficient}, and SGD-type methods in weakly convex optimization~\cite{mai2020convergence,davis2019stochastic}.

Our first remark is about the metric of the norm used for the gradient of the Moreau envelope in~\Cref{th: th1}. We then continue to discuss the dependence of our bound w.r.t. important quantities.
\begin{remark}
By~\eqref{eq: mor1}, one has $\| \nabla \phi^{t^\ast}_{1/\bar\rho}(x_{t^\ast})\|^2_{\hat v_{t^\ast}^{-1/2}} = \bar\rho^2\| x_{t^\ast} - \hat x_{t^\ast} \|^2_{\hat v_{t^\ast}^{1/2}}$.
We note that $\| x_{t^\ast} - \hat x_{t^\ast} \|^2_{\hat v_{t^\ast}^{1/2}} \geq \sqrt{\delta} \| x_{t^\ast} - \hat x_{t^\ast} \|^2$ as $\hat v_{t, i} \geq \delta$.
It also holds that $\hat v_{t,i}\leq G^2$.
Therefore, one can convert our guarantees to $\| x_{t^\ast} - \hat x_{t^\ast} \|^2$ or $\| \nabla \phi^{t^\ast}_{1/\bar\rho}(x_{t^\ast}) \|$ by multiplying the right hand side by appropriate quantities depending on $\delta$ or $G$.
We leave the result with the metric however, as $\delta$ and $G$ are the worst case bounds.
\end{remark}
\textbf{Dependence of $\beta_1$}.
Comparing with the previous work, the scaling of our bound in terms of $\beta_1$ is $(1-\beta_1)^{-1}$ matching the state-of-the-art dependence for the unconstrained setting~\cite{alacaoglu2020new,defossez2020convergence}.

\textbf{Dependence of $d$}.  Standard dependence of $d$ in the
convergence rates for Adam-type algorithms for unconstrained case is
$d/\sqrt{T}$~\cite{alacaoglu2020new,defossez2020convergence}.\footnote{We note that in~\cite{chen2020closing} better dependence is obtained by using step sizes in the order of $\frac{1}{\sqrt{d}}$, which we do not consider, as this choice forces small step sizes.}

Even
though in~\Cref{th: th1}, the constant $C_3$ has worst case dependence
$d^2$, this is merely due to assumptions.  The main reason is
that we do not assume boundedness of the sequence $x_t$, instead we
prove the necessary result for the analysis in~\Cref{lem: hatx_bd}.  However, this result gives a bound
for $\| x_t - \hat x_t \|$, which is naturally dimension dependent.
We used this bound in~\eqref{eq: bdd_issue}, where we need to use
$\| x_t - \hat x_t \|_{\infty}$.  If we had assumed a bound for
$\| x_t - \hat x_t \|_{\infty}$, then in~\eqref{eq: bdd_issue} we
could have used it instead of~\Cref{lem: hatx_bd} to have standard
$d/\sqrt{T}$ in $C_3$. We note that boundedness assumption also would
remove a factor of $\frac{1}{\sqrt{\delta}}$ in the bound, as those
appear in the steps where we avoid boundedness assumption.

\textbf{Dependence of $\delta$}.
Our bound has a polynomial dependence of $1/\delta$ similar to~\cite{alacaoglu2020new,chen2020closing,chen2019convergence}.
In~\cite{defossez2020convergence}, a more refined technique from~\cite{ward2019adagrad} is used to have a logarithmic dependence of $1/\delta$.
This technique, used on the case of smooth unconstrained problems in these works, did not seem to apply to our setting.

\section{Applications and extensions}
\subsection{Applications}
\textbf{RMSprop}.
The counterexamples presented in~\cite{reddi2018convergence} show that RMSprop, similar to Adam might diverge in simple problems.
Setting $\beta_1=0$ in AMSGrad~\cite{reddi2018convergence} results in an algorithm similar to RMSprop, with the difference of having $\hat v_t$ as the output of the $\max$ step.
Therefore, our analysis also applies to this version of RMSprop with similar guarantees.
\begin{corollary}
Let $\beta_1 = 0$. Then, for a variant of RMSprop~\cite{reddi2018convergence}, obtained by setting $\beta_1$ in~\Cref{alg:alg1},~\Cref{th: th1} applies with $\beta_1 = 0$.
\end{corollary}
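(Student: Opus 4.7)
The proof here is essentially a matter of verification, since the corollary only asserts that Theorem \ref{th: th1} continues to apply once $\beta_1$ is fixed to $0$. My plan is therefore to (i) check that every hypothesis of the theorem remains satisfied, (ii) identify the resulting iteration as the claimed RMSprop variant, and (iii) record what the constants $C_1,C_2,C_3$ reduce to.

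First I would run through the assumptions of Theorem \ref{th: th1}. Assumption \ref{as: as1} is about $f$, $\mathcal X$, $g_t$, and $f^\star$, so it is independent of the momentum parameters and carries over untouched. The inequality $\beta_1 < 1$ is trivial since $\beta_1 = 0$; $\beta_2 < 1$ is part of the hypothesis of the corollary (inherited from the theorem); and the ratio $\gamma = \beta_1^2/\beta_2 = 0$ is strictly less than $1$ for any admissible $\beta_2 \in (0,1)$. The choice $\bar\rho = 2\hat\rho$ only depends on $\rho$ and $\delta$, not on $\beta_1$, so it is preserved. Hence every condition required to invoke Theorem \ref{th: th1} holds.

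Next I would confirm that the iteration is what the corollary calls a RMSprop variant. With $\beta_1 = 0$, the update $m_t = \beta_1 m_{t-1} + (1-\beta_1)g_t$ collapses to $m_t = g_t$, so the projection step in Algorithm \ref{alg:alg1} becomes $x_{t+1} = P_{\mathcal X}^{\hat v_t^{1/2}}(x_t - \alpha_t \hat v_t^{-1/2} g_t)$, while the $v_t$ and $\hat v_t$ updates are unchanged. This is RMSprop augmented with the AMSGrad monotonicity safeguard $\hat v_t = \max(\hat v_{t-1}, v_t)$, matching the variant discussed in the paragraph preceding the corollary.

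Finally, I would simplify the constants by substituting $\beta_1 = 0$: the term $\tfrac{4\rho\beta_1\alpha}{\sqrt{\delta}(1-\beta_1)}\sqrt{d}\hat D G$ in $C_1$ vanishes, the factors $\tfrac{\beta_1}{1-\beta_1}$ and $\tfrac{2\beta_1^2}{(1-\beta_1)^2}$ inside $C_2$ vanish, and $\tfrac{1-\beta_1}{\sqrt{(1-\beta_2)(1-\gamma)}}$ reduces to $\tfrac{1}{\sqrt{1-\beta_2}}$; the constant $C_3$ is already $\beta_1$-free. Plugging these simplified constants into the conclusion of Theorem \ref{th: th1} yields the stated bound for the RMSprop variant, completing the corollary. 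There is no substantive obstacle here: the only thing to double-check is that $\gamma < 1$ remains valid in the boundary case $\beta_1 = 0$, which it does trivially.
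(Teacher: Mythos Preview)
Your proposal is correct and matches the paper's treatment: the paper states the corollary without a separate proof, since it is an immediate specialization of Theorem~\ref{th: th1} to $\beta_1=0$, and your verification of the hypotheses, identification of the algorithm, and simplification of the constants are exactly what is implicit in that specialization.
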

It is easy to see that $\beta_1=0$ gives a better bound in~\Cref{th: th1}.
This is in fact common for the bounds of Adam-type algorithms even in the convex case~\cite{reddi2018convergence}.
Setting nonzero momentum parameters $\beta_1$, $\beta_2$ do not predict improvement, however, in practice they are routinely observed to improve performance.

\textbf{SGD with momentum}.
When $\hat v_t = \one, \forall t$, then AMSGrad reduces to an algorithm similar to SGD with momentum.
Lack of diagonal step sizes in this case simplifies the analysis as weighted projections are not used in the algorithm.
This specific case is studied in the recent work~\cite{mai2020convergence}, with a slightly different way to set $m_t$.
Our analysis can be seen as an alternative derivation of convergence for a method similar to~\cite{mai2020convergence}.

\textbf{Constrained smooth optimization}.
A special case of~\eqref{eq: prob} is when $f$ is $L$-smooth.
In this case, the standard convergence measure is the gradient mapping~\cite{ghadimi2016mini}, which is used in~\cite{chen2019zo}
\begin{equation}
\mathcal{G}_{\lambda}(x) =  \frac{\hat v_t^{1/4}}{\lambda} \left( x -
  P^{\hat v_t^{1/2}}_\mathcal{X}(x - \lambda\hat v_t^{-1/2} \nabla
  f(x)) \right).
\end{equation}
It is instructive to observe that when $\mathcal{X}=\mathbb{R}^d$, then $\|\mathcal{G}_\lambda(x)\| = \|\nabla f(x)\|_{\hat v_t^{-1/2}}\geq \frac{1}{G} \| \nabla f(x)\|$ which is the stationarity measure for smooth unconstrained problems.
In the cases when $\mathcal{X}\neq\mathbb{R}^d$, gradient mapping is used as a standard stationarity measure~\cite{ghadimi2016mini,davis2019stochastic,mai2020convergence}.

As illustrated in~\cite{davis2019stochastic}, for the specific case of constrained smooth minimization, norm of the Moreau envelope is of the same order as the norm of the gradient mapping, therefore, the results can be converted to guarantees on gradient mapping norms.
Using similar ideas as in~\citep[Theorem 3.5]{drusvyatskiy2018error},~\cite{davis2019stochastic}, one can show that $\| \mathcal{G}_{1/\bar\rho}(x_t) \| \leq C_{\mathrm{g, m}} \| \nabla \phi^t_{1/\bar\rho}(x_t) \|_{\hat v_t^{-1/2}}$, for a constant $C_{\mathrm{g, m}}$ (see~\Cref{sec: grad_map_stuff_moreau}).

\subsection{An extension: Scalar AdaGrad with momentum}\label{subs:adagrad}
An alternative adaptive algorithm is AdaGrad~\cite{duchi2011adaptive} and its variants with first order momentum are referred to as AdamNC~\cite{reddi2018convergence} or AdaFOM~\cite{chen2019convergence}.
In unconstrained smooth stochastic optimization, it has been observed that the
same proof structure applies to AMSGrad and AdaGrad-based methods
simultaneously~\cite{chen2019convergence,defossez2020convergence}. However, in our setting, the analysis we developed for AMSGrad does not directly apply to AdaGrad-based methods.

The main reason is that $v_t$ in the case of AdaGrad does not admit a
lower bound separated from $0$, unlike AMSGrad where
$0 < \delta \leq \hat v_t$.  The uniform lower bound is necessary for
converting regular weak convexity assumption w.r.t.\ norm
$\| \cdot \|$ to the one w.r.t.\ the weighted norm
$\| \cdot \|_{v_t^{1/2}}$ in the sense of~\Cref{rem: rem1}.  Naively
assuming the existence of $\hat\rho$ in~\Cref{rem: rem1}
is not consistent, since $v_t$ is not separated from zero due to
$v_t \geq \frac{\delta}{\sqrt{t}}$ in AdaGrad, and hence, the norm
$\| \cdot \|_{ v_t^{1/2}}$ is not well-defined.

In this section, we provide partial results on this direction.  In
particular, we show that the scalar version of AdaGrad, that is used
for example
in~\cite{ward2019adagrad,li2019convergence,levy2017online,levy2018online},
along with its variant with first order moment estimation also has the
same convergence rate.  In the framework of~\Cref{alg:alg1},
\textit{scalar} (non-diagonal) version of these methods iterate as,
for $g_t \in \partial f(x_t, \xi_t)$,
\begin{equation}\label{eq: adagrad}
\begin{cases}
&m_{t}= \beta_{1}m_{t-1} + (1-\beta_{1})g_t  \\
&v_t= \frac{1}{t} (\delta+ \frac{1}{d}\sum_{j=1}^t \| g_j \|^2)  \\
&x_{t+1}= P_{\mathcal{X}} (x_t - \frac{\alpha_t}{\sqrt{v_t}} m_t ),
\end{cases}
\end{equation}
where $P_{\mathcal{X}}$ denotes a standard Euclidean projection
(without any metric).  The factor of $1/d$ in front of gradient norms
is to normalize the step size, as $\ell_2$-norm is dimension
dependent.  This factor only affects the dimension dependence of the
bound.

In this case, one does not need the time-dependent definitions for Moreau envelope and proximal point.
Thus, one can define $\hat x_t = \prox_{1/\bar\rho}(x_t)$ and $\phi_{1/\bar\rho}(x) = \min_{y\in\mathcal{X}} f(y) + \frac{\bar\rho}{2} \| y - x \|^2$, due to lack of weighted projection in the algorithm since $v_t$ is now a scalar.
The proof then is similar to~\cite{davis2019stochastic} with AdaGrad step sizes. The difficulties arising due to adaptive step sizes and existence of $\beta_1$, are handled using the results in~\Cref{lem: hatx_bd},~\Cref{lem: lem1}, and~\Cref{lem: sum_mt}.
\begin{theorem}\label{th: th2}
Let~\Cref{as: as1} hold. Then, for the method sketched in~\eqref{eq: adagrad}, with $\beta_1 < 1$, $\alpha_t =\frac{\alpha}{\sqrt{t}}$ it holds
\begin{align}
\mathbb{E} \| \nabla \phi_{1/2\rho}&(x_{t^\ast})\|^2 \leq\frac{2G}{\alpha\sqrt{T} } \bigg[ C_1 + \left(1+\log\left(\frac{TG^2}{\delta}+1\right)\right)C_2 \bigg],\notag
\end{align}
where $C_1 = \phi_{1/2\rho}(x_1) - f^\star + 2\rho\left( \frac{2\beta_1}{1-\beta_1} +1 \right)\frac{\alpha \hat D \sqrt{d}G}{\sqrt{\delta}}$,
$C_2 = 2\rho\alpha d\left( \frac{1}{2} + \frac{\beta_1}{1-\beta_1} + \frac{2\beta_1^2}{(1-\beta_1)^2} \right)$, and $\hat D = \frac{2\sqrt{d}G}{\sqrt{\rho}}$.
\end{theorem}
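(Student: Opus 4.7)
My plan follows the architecture of the proof of \Cref{th: th1}, taking advantage of two simplifications afforded by the scalar step size: (a) the Moreau envelope $\phi_{1/2\rho}$ and proximal point $\hat x_t = \prox_{1/2\rho}(x_t)$ are time-independent, so $\hat x_t$ is $\mathcal{F}_t$-measurable and no longer couples with the adaptive quantities; and (b) the update uses a standard Euclidean projection, which is nonexpansive without any weighted norm. Setting $\bar\rho = 2\rho$, weak convexity combined with optimality of the proximal point yields $\langle x_t - \hat x_t, q_t\rangle \geq \tfrac{\rho}{2}\|x_t - \hat x_t\|^2$ with $q_t = \mathbb{E}_t[g_t] \in \partial f(x_t)$.

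First I derive a one-step descent for $\phi_{1/2\rho}$. Combining $\phi_{1/2\rho}(x_{t+1})\leq \phi(\hat x_t)+\rho\|\hat x_t - x_{t+1}\|^2 - \tfrac{\rho}{2}\|\hat x_t - \hat x_{t+1}\|^2$ (the last term from $\rho$-strong convexity of the inner Moreau problem $y\mapsto \phi(y)+\rho\|y-x_{t+1}\|^2$ about its minimizer $\hat x_{t+1}$) with nonexpansiveness of the projection $x_{t+1}=P_\mathcal{X}(x_t - \tfrac{\alpha_t}{\sqrt{v_t}} m_t)$, I obtain
\begin{equation*}
\phi_{1/2\rho}(x_{t+1}) \leq \phi_{1/2\rho}(x_t) + 2\rho\tfrac{\alpha_t}{\sqrt{v_t}}\langle \hat x_t - x_t, m_t\rangle + \tfrac{\rho\alpha_t^2}{v_t}\|m_t\|^2 - \tfrac{\rho}{2}\|\hat x_t - \hat x_{t+1}\|^2.
\end{equation*}
I then apply \Cref{lem: lem1} with $A_t = 2\rho\tfrac{\alpha_t}{\sqrt{v_t}}(x_t - \hat x_t)$, producing a main sum $\sum_t\langle A_t, m_t\rangle$ (controlled by telescoping the descent inequality), a boundary term $\langle A_T, m_T\rangle$ (a constant, via Cauchy--Schwarz, \Cref{lem: hatx_bd}, $\|m_t\|\leq \sqrt{d}G$, and $\alpha_T/\sqrt{v_T}\leq \alpha/\sqrt{\delta}$), and a cross term $\sum_t\langle A_t - A_{t+1}, m_t\rangle$. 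I split the latter into three pieces: one telescoping via monotonicity of $\alpha_t/\sqrt{v_t}$, one using projection nonexpansiveness to produce $\alpha_t^2/v_t\|m_t\|^2$, and one of the form $\langle\tfrac{\alpha_{t+1}}{\sqrt{v_{t+1}}}(\hat x_{t+1}-\hat x_t), m_t\rangle$ treated by Young's inequality with coefficient $c = \tfrac{1-\beta_1}{2\beta_1}$ so that the $\|\hat x_t - \hat x_{t+1}\|^2$ remainder is exactly cancelled by the negative term in the descent inequality; the $\|m_t\|^2$ remainder contributes the $\tfrac{2\beta_1^2}{(1-\beta_1)^2}$ term in $C_2$.

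The main obstacle is that $v_t$ depends on $g_t$, so $\tfrac{\alpha_t}{\sqrt{v_t}}$ is not $\mathcal{F}_t$-measurable and conditional expectation cannot directly produce a stationarity measure. I address this by the splitting
\begin{equation*}
\tfrac{\alpha_t}{\sqrt{v_t}} = \tfrac{\alpha_{t-1}}{\sqrt{v_{t-1}}} + \Big(\tfrac{\alpha_t}{\sqrt{v_t}} - \tfrac{\alpha_{t-1}}{\sqrt{v_{t-1}}}\Big).
\end{equation*}
The first factor is $\mathcal{F}_t$-measurable since $v_{t-1}$ depends only on $g_1,\ldots,g_{t-1}$, so taking $\mathbb{E}_t$ and applying weak convexity gives $\tfrac{\alpha_{t-1}}{\sqrt{v_{t-1}}}\tfrac{\rho}{2}\|x_t - \hat x_t\|^2$ as the stationarity contribution. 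The correction factor is non-positive by monotonicity; its contribution is bounded in absolute value by $\hat D\sqrt{d}G\cdot|\tfrac{\alpha_{t-1}}{\sqrt{v_{t-1}}} - \tfrac{\alpha_t}{\sqrt{v_t}}|$ via Cauchy--Schwarz and \Cref{lem: hatx_bd}, and the telescoping sum is at most $\alpha\hat D\sqrt{d}G/\sqrt{\delta}$, which is absorbed into $C_1$.

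Finally, I bound $\sum_t \tfrac{\alpha_t^2}{v_t}\|m_t\|^2 \leq d\alpha^2(1 + \log(TG^2/\delta + 1))$ by the standard AdaGrad logarithmic inequality, after upper bounding $\|m_t\|^2$ by the convex combination $\sum_{s\leq t}(1-\beta_1)\beta_1^{t-s}\|g_s\|^2$ and swapping the order of summation; this yields the $\log$-factor in $C_2$. A uniform lower bound $\tfrac{\alpha_{t-1}}{\sqrt{v_{t-1}}} \geq \tfrac{\alpha}{\sqrt{(t-1)G^2 + \delta}}$ of order $\alpha/(G\sqrt{T})$ then extracts $\|x_t - \hat x_t\|^2 = \tfrac{1}{4\rho^2}\|\nabla \phi_{1/2\rho}(x_t)\|^2$ uniformly in $t$, so averaging over the uniformly random $t^\ast \in \{1,\ldots,T\}$ produces the claimed $\tfrac{G}{\alpha\sqrt{T}}$ rate, with the three coefficients of $C_2$ tracking respectively to the descent inequality $\tfrac{1}{2}$, the $\tfrac{\beta_1}{1-\beta_1}$ factor of \Cref{lem: lem1} applied to the projection-step piece of the cross term, and the Young's inequality in the third piece.
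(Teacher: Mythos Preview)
Your proposal is correct and follows essentially the same route as the paper: the same choice $A_t = \bar\rho\tfrac{\alpha_t}{\sqrt{v_t}}(x_t-\hat x_t)$ in \Cref{lem: lem1}, the same one-step Moreau descent with the strong-convexity negative term, the same three-way split of the cross term with Young's inequality tuned to cancel $\|\hat x_{t+1}-\hat x_t\|^2$, the same $\tfrac{\alpha_{t-1}}{\sqrt{v_{t-1}}}$ shift to decouple the step size from $g_t$, and the same AdaGrad logarithmic bound on $\sum_t \tfrac{\alpha_t^2}{v_t}\|m_t\|^2$. The only discrepancies are cosmetic (an index $t$ versus $t{+}1$ in one factor of the cross-term split, and the $\alpha$ versus $\alpha^2$ constant in the log-sum bound).
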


We leave it as an open question to derive similar results for AdaGrad-based methods with diagonal step sizes.

\newpage
\section*{Acknowledgements}
This project has received funding from the European Research Council (ERC) under the European Union's Horizon $2020$ research and innovation programme (grant agreement no $725594$ - time-data), the Swiss National Science Foundation (SNSF) under grant number $200021\_178865 / 1$, the Department of the Navy, Office of Naval Research (ONR)  under a grant number N62909-17-1-211, and the Hasler Foundation Program: Cyber Human Systems (project number 16066).

\bibliographystyle{abbrvnat}
\bibliography{lit_adap_nc}

\newpage

\appendix
\allowdisplaybreaks
\section{Proofs}\label{sec: appendixmain}

\begin{replemma}{lem: hatx_bd}
Let~\Cref{as: as1} hold. Let $\bar\rho > \hat\rho$, and $\hat v_t \geq \delta > 0$ (see~\Cref{alg:alg1}).
It follows that
\begin{equation}
\| x_t - \hat x_t \|^2 \leq \hat D^2 \coloneqq \frac{4 dG^2}{{\delta}(\bar\rho-\hat\rho)^2}.\notag
\end{equation}
\end{replemma}
\begin{proof}
By the definition of $\hat x_t$ in~\eqref{eq: def_hatxt}, it follows that
\begin{align}
\phi(\hat x_t) + \frac{\bar\rho}{2} \| x_t - \hat x_t \|^2_{\hat v_t^{1/2}} \leq \phi(x_t) + \frac{\bar\rho}{2} \| x_t - x_t \|^2_{\hat v_t^{1/2}} = \phi(x_t).\notag
\end{align}
Next, we use $\hat\rho$-weak convexity of $\phi$ with respect to norm $\| \cdot \|_{\hat v_t^{1/2}}$ from~\Cref{rem: rem1}, and the fact that $x_t, \hat x_t \in \mathcal{X}$ to get for any vector $q_t$ such that $q_t\in\partial f(x_t)$,
\begin{equation}
\phi(x_t) - \phi(\hat x_t) \leq \langle x_t - \hat x_t, q_t \rangle + \frac{\hat\rho}{2} \| x_t - \hat x_t \|^2_{\hat v_t^{1/2}}.\notag
\end{equation}
We sum two inequalities and apply  Cauchy-Schwarz inequality
\begin{equation}\label{eq:better estimate}
\frac{\bar\rho - \hat\rho}{2} \| x_t - \hat x_t \|^2_{\hat v_t^{1/2}}
\leq \lr{x_t - \hx_t, g_t} \leq \n{q_t}_{\hat v_t^{-1/2}}\n{x_t-\hx_t}_{\hat v_t^{1/2}},\notag
\end{equation}
which yields
\begin{equation}
\frac{\bar\rho - \hat\rho}{2} \| x_t - \hat x_t \|_{\hat
    v_t^{1/2}}\leq \n{q_t}_{\hat v_t^{-1/2}}.\notag
\end{equation}
As $\hv_{t,i}\geq \delta $ and for $q_t$ such that $\mathbb{E} g_t =
q_t$, $\| q_t \|^2=\n{\E g_t}^2 \leq \mathbb{E}\| g_t\|^2\leq dG^2$ by~\Cref{as: as1}, we have
\begin{equation}
\n{q_t}^2_{\hv_t^{-1/2}}\leq
  \frac{dG^2}{\sqrt{\delta}}\notag
  \end{equation}
and the final bound follows immediately.
\end{proof}

\begin{replemma}{lem: lhs}
Let~\Cref{as: as1} hold.
Let $q_t = \mathbb{E}_t [g_t] \in\partial f(x_t)$, then it follows that
\begin{align}
\alpha_t \mathbb{E}_t   \langle x_t - \hat x_t, g_t \rangle \geq \alpha_t(\bar\rho - \hat\rho)&\mathbb{E}_t  \| x_t - \hat x_t \|^2_{\hat v_t^{1/2}} -(\alpha_{t-1}-\alpha_t) \sqrt{d}\hat D G - \frac{\bar\rho-\hat\rho}{4\bar\rho} \mathbb{E}_t\| \hat x_t - \hat x_{t-1} \|^2_{\hat v_{t-1}^{1/2}} \notag \\
- \frac{\alpha_{t-1}}{2} &\mathbb{E}_t \| m_{t-1} \|^2_{\hat v_{t-1}^{-1/2}} - \left( \frac{1}{2} + \frac{\bar\rho}{\bar\rho-\hat\rho} \right)\frac{\alpha_{t-1}^2}{\sqrt{\delta}} \mathbb{E}_t \| g_t \|^2.\notag
\end{align}
\end{replemma}
\begin{proof}
We first decompose the LHS
\begin{align}
\a_t\langle x_t - \hat{x}_t, g_t \rangle &= \a_t\langle x_t - \hat{x}_t, q_t
                                       \rangle + \a_t\langle x_t -
                                       \hat{x}_t, g_t - q_t \rangle\notag \\
                                     &=\a_t\langle x_{t} - \hat{x}_t, q_t
                                       \rangle + \langle \a_t(x_t -
                                       \hat{x}_t)-\a_{t-1}(x_{t-1}-\hx_{t-1}),
                                       g_t - q_t \rangle \notag \\
                                       &+\langle \a_{t-1}(x_{t-1}-\hx_{t-1}), g_t - q_t \rangle\label{eq: lhs_lem_eq1}
\end{align}
In this bound, the last term will be $0$ after taking conditional
expectation $\E_t$ as $\hat x_{t-1}$ depends on $\hat v_{t-1}$, which,
in turn, depends only on $g_1,\dots, g_{t-1}$, thus, independent of $g_t$.

For the first term in~\eqref{eq: lhs_lem_eq1},  we recall that  $\hat x_t \in\mathcal{X}$, $x_t\in\mathcal{X}$,
$q_t\in\partial f(x_t)$.
Then we use $\hat\rho$-weak
convexity of  $f$ with respect to $\| \cdot \|_{\hat v_t^{1/2}}$,
\begin{align}
\langle x_t - \hat x_t, q_t \rangle &\geq f(x_t) - f(\hat x_t) - \frac{\hat\rho}{2} \| x_t - \hat x_t \|^2_{\hat v_t^{1/2}} \notag \\
&= \Big( f(x_t) + \frac{\bar\rho}{2} \| x_t - x_t \|^2_{\hat v_t^{1/2}} \Big) - \Big( f(\hat x_t) + \frac{\bar\rho}{2} \| x_t - \hat x_t \|^2_{\hat v_t^{1/2}} \Big) + \frac{\bar\rho - \hat\rho}{2} \| x_t - \hat x_t \|^2_{\hat v_t^{1/2}} \notag \\
&\geq (\bar\rho - \hat\rho)\|x_t-\hat x_t \|^2_{\hat v_t^{1/2}},\label{eq: dd_std_lhs_est}
\end{align}
where the last step is due to $x\mapsto f(x) + I_{\mathcal{X}}(x)
+ \frac{\bar\rho}{2} \| x-x_t\|^2_{\hat v_t^{1/2}}$ being $\bar\rho -
\hat\rho$ strongly convex w.r.t. $\| \cdot \|_{\hat v_t^{1/2}}$, with the minimizer $\hat x_t$, and $x_t, \hat x_t \in \mathcal{X}$.

Next, we need to lower bound the second term in~\eqref{eq: lhs_lem_eq1}, for which we upper bound the term given by
\begin{multline}
 \langle \alpha_{t-1}(x_{t-1} - \hat x_{t-1}) - \alpha_{t}(x_{t} - \hat x_{t}), g_t - q_t \rangle = (\alpha_{t-1} - \alpha_{t})\langle x_{t} - \hat x_{t}, g_t - q_t \rangle  \\
+\alpha_{t-1} \langle x_{t-1} - x_{t}, g_t - q_t \rangle + \alpha_{t-1} \langle \hat x_{t}-\hat x_{t-1} , g_t - q_t \rangle. \label{eq: lem_lhs_eq2}
\end{multline}

We proceed with bounding the first term in the RHS of~\eqref{eq: lem_lhs_eq2}, using $\alpha_{t}\leq\alpha_{t-1}$,
\begin{align}
\mathbb{E}_t (\alpha_{t-1} - \alpha_{t})\langle x_t - \hat x_t, g_t - q_t \rangle &\leq  (\alpha_{t-1} - \alpha_t) \mathbb{E}_t\| x_t - \hat x_t \| \| g_t - q_t \| \notag \\
&\leq (\alpha_{t-1} - \alpha_t) {\hat D}\mathbb{E}_t \| g_t - q_t \| \notag \\
&\leq (\alpha_{t-1} - \alpha_t) {\hat D} \sqrt{\mathbb{E}_t \| g_t\|^2}\notag\\
&\leq (\alpha_{t-1} - \alpha_t) {\hat D} \sqrt{d} G,\notag
\end{align}
where the second inequality follows from~\Cref{lem: hatx_bd} and third
inequality follows from Jensen's inequality and  ${\mathbb{E}_t \| g_t
  - \E_tg_t \|^2}\leq {\mathbb{E}_t \| g_t \|^2}$.

For the second term in the RHS of~\eqref{eq: lem_lhs_eq2} we use
Cauchy-Schwarz and Young's inequalities and nonexpansiveness of weighted projection to get
\begin{align}
\mathbb{E}_t \alpha_{t-1} \langle x_{t-1} - x_t, g_t - q_t \rangle &\leq \frac{1}{2} \mathbb{E}_t\| x_t - x_{t-1} \|^2_{\hat v_{t-1}^{1/2}} + \frac{\alpha_{t-1}^2}{2} \mathbb{E}_t\| g_t - q_t \|^2_{\hat v_{t-1}^{-1/2}} \notag \\
&\leq \frac{\alpha_{t-1}^2}{2} \mathbb{E}_t\| m_{t-1} \|^2_{\hat v_{t-1}^{-1/2}} + \frac{\alpha_{t-1}^2}{2\sqrt{\delta}} \mathbb{E}_t\| g_t - q_t \|^2 \notag \\
&\leq \frac{\alpha_{t-1}^2}{2} \mathbb{E}_t\| m_{t-1} \|^2_{\hat v_{t-1}^{-1/2}} + \frac{\alpha_{t-1}^2}{2\sqrt{\delta}} \mathbb{E}_t\| g_t \|^2. \notag
\end{align}
Similarly, we estimate the third term in the RHS of~\eqref{eq: lem_lhs_eq2}
\begin{align}
\mathbb{E}_t \alpha_{t-1} \langle \hat x_{t} - \hat x_{t-1}, g_t - q_t \rangle &\leq \frac{\bar\rho - \hat\rho}{4\bar\rho} \|\hat x_t -\hat x_{t-1} \|^2_{\hat v_t^{1/2}} + \frac{\alpha_{t-1}^2\bar\rho}{\bar\rho-\hat\rho} \mathbb{E}_t \|g_t - q_t \|^2_{\hat v_t^{-1/2}} \notag \\
&\leq \frac{\bar\rho - \hat\rho}{4\bar\rho} \|\hat x_t -\hat x_{t-1} \|^2_{\hat v_t^{1/2}} + \frac{\alpha_{t-1}^2\bar\rho}{(\bar\rho-\hat\rho)\sqrt{\delta}} \mathbb{E}_t \|g_t \|^2.\notag
\end{align}
Combining all the bounds gives the result.
\end{proof}

\begin{replemma}{lem: sum_mt}
Let $\beta_1 < 1$, $\beta_2 < 1$, $\gamma = \frac{\beta_1^2}{\beta_2} < 1$, then it holds that
\begin{equation}
\sum_{t=1}^T \alpha_t^2 \| m_t \|^2_{\hat v_t^{-1/2}} \leq \frac{1-\beta_1}{\sqrt{(1-\beta_2)(1-\gamma)}} dG(1+\log T).\notag
\end{equation}
\end{replemma}
\begin{proof}
We start with the result of~\citep[Lemma 3]{alacaoglu2020new}
\begin{equation}
\|m_t\|^2_{\hat v_t^{-1/2}} \leq \frac{(1-\beta_1)^2}{\sqrt{(1-\beta_2)(1-\gamma)}}\sum_{i=1}^d \sum_{j=1}^t \beta_1^{t-j} \vert g_{j, i} \vert.\notag
\end{equation}
We will proceed similar to~\citep[Lemma 4]{alacaoglu2020new} with the only change of having $\alpha_t^2$ instead of $\alpha_t$
\begin{align}
\sum_{t=1}^T \alpha_t^2 \| m_t \|^2_{\hat v_t^{-1/2}} &\leq \frac{(1-\beta_1)^2}{\sqrt{(1-\beta_2)(1-\gamma)}} \sum_{i=1}^d \sum_{t=1}^T \alpha_t^2 \sum_{j=1}^t\beta_1^{t-j}\vert g_{j,i}\vert \notag \\
&= \frac{(1-\beta_1)^2}{\sqrt{(1-\beta_2)(1-\gamma)}} \sum_{i=1}^d \sum_{j=1}^T \sum_{t=j}^T \alpha_t^2 \beta_1^{t-j}\vert g_{j,i}\vert \notag \\
&\leq \frac{1-\beta_1}{\sqrt{(1-\beta_2)(1-\gamma)}} \sum_{i=1}^d \sum_{j=1}^T  \alpha_j^2 \vert g_{j,i}\vert \notag \\
&\leq \frac{1-\beta_1}{\sqrt{(1-\beta_2)(1-\gamma)}} dG(1+\log T).\notag\qedhere
\end{align}
\end{proof}

\begin{reptheorem}{th: th1}
Let~\Cref{as: as1} hold. Let $\beta_1 < 1$, $\beta_2 < 1$, $\gamma =
\frac{\beta_1^2}{\beta_2} < 1$, $\bar\rho=2\hat\rho$.
Then, for iterate $x_{t^\ast}$ generated by~\Cref{alg:alg1}, it follows that
\begin{align}
\mathbb{E} \| \nabla \phi^{t^\ast}_{1/\bar\rho}&(x_{t^\ast})\|^2_{\hat v_{t^\ast}^{-1/2}} \leq\frac{2}{\alpha \sqrt{T} } \bigg[ C_1 + (1+\log T)C_2 + C_3 \bigg],\notag
\end{align}
where
$C_1 =\frac{4\rho\beta_1\alpha}{\sqrt{\delta}(1-\beta_1)}  \sqrt{d} \hat D G + \phi^1_{1/\bar\rho}(x_1) - f^\star$,\\
$C_2 = \frac{5\rho}{{\delta}}dG^2
+\frac{2\rho}{\sqrt{\delta}}\left(1 + \frac{G}{\sqrt{\delta}} + \frac{\beta_1}{1-\beta_1} + \frac{2\beta_1^2}{(1-\beta_1)^2} \right) \frac{1-\beta_1}{\sqrt{(1-\beta_2)(1-\gamma)}} dG$,\\
$C_3 = \bar\rho \hat D^2 \sum_{i=1}^d \mathbb{E}\hat v_{T+1, i}^{1/2}$, and $\hat D \coloneqq \frac{2 \sqrt{d}G}{\rho}$.
\end{reptheorem}

\begin{proof}
We sum the result of Lemma~\ref{lem: lem1} and use $A_1 = A_0$. with $m_0 =0$. We note that we have $A_t = \bar\rho\alpha_t(x_t - \hat{x}_t)$, for $t\geq 1$.
\begin{align}
\sum_{t=1}^T \langle A_t, g_t \rangle = \frac{\beta_1}{1-\beta_1} \langle A_T, m_T \rangle + \sum_{t=1}^T \langle A_t, m_t \rangle + \frac{\beta_1}{1-\beta_1}\sum_{t=1}^{T-1} \langle A_t - A_{t+1}, m_t \rangle.\label{eq: key1}
\end{align}
After plugging in the value of $A_t$,~\eqref{eq: key1} becomes
\begin{align}
\sum_{t=1}^T \bar\rho\alpha_t \langle x_t - \hat{x}_t, g_t \rangle &\leq \frac{\beta_1\bar\rho\alpha_T}{1-\beta_1} \langle x_T - \hat{x}_T, m_T \rangle + \sum_{t=1}^T \bar\rho\alpha_t \langle x_t- \hat{x}_t, m_t \rangle \notag \\
&\qquad + \frac{\beta_1 \bar\rho}{1-\beta_1} \sum_{t=1}^{T-1} \langle \alpha_t(x_t - \hat{x}_t) - \alpha_{t+1}(x_{t+1} - \hat{x}_{t+1}), m_t \rangle.\label{eq: mainmain1}
\end{align}
LHS of this bound is suitable for applying~\Cref{lem: lhs} to obtain the stationarity measure.
We have to estimate the three terms on the RHS.

$\bullet$ Bound for $\frac{\beta_1\bar\rho\alpha_T}{1-\beta_1} \langle x_T - \hat x_T, m_T \rangle$ in~\eqref{eq: mainmain1}.

Applying Cauchy-Schwarz inequality and using~\Cref{lem: hatx_bd} is enough to bound this term, with $\|m_t \|_\infty \leq G$:
\begin{align}\label{eq: first_est1}
\langle x_T - \hat x_T, m_T \rangle \leq \| x_T - \hat x_T \| \| m_T \| \leq  \hat D \sqrt{d}G.
\end{align}

$\bullet$ Bound for $\frac{\beta_1 \bar\rho}{1-\beta_1} \sum_{t=1}^{T-1} \langle \alpha_t(x_t - \hat{x}_t) - \alpha_{t+1}(x_{t+1} - \hat{x}_{t+1}), m_t \rangle$ in~\eqref{eq: mainmain1}.

We have
\begin{align}
\langle \alpha_t (x_t - \hat{x}_t) - \alpha_{t+1}(x_{t+1} - \hat{x}_{t+1}), m_t \rangle = (\alpha_{t}-\alpha_{t+1}) &\langle x_{t+1} - \hat{x}_{t+1}, m_t \rangle + \alpha_t \langle x_t - x_{t+1}, m_t \rangle \notag \\
 + \alpha_t &\langle \hat x_{t+1} - \hat x_t, m_t\rangle. \label{eq: bd1}
\end{align}
For the first term in~\eqref{eq: bd1}, we use that $\alpha_t \geq
\alpha_{t+1}$,~\Cref{lem: hatx_bd}, Cauchy-Schwarz inequality and $\|m_t \|_{\infty}\leq G$
 to obtain
 \begin{align}
\sum_{t=1}^{T-1} (\alpha_{t}-\alpha_{t+1}) \langle x_{t+1} - \hat{x}_{t+1}, m_t \rangle &\leq  \sum_{t=1}^{T-1} (\alpha_t - \alpha_{t+1}) \hat D\sqrt{d}G \leq \alpha_1 \hat D\sqrt{d}G.\notag
\end{align}
For the second term of~\eqref{eq: bd1}, using nonexpansiveness of weighted projection, we deduce
\begin{align}
\alpha_t \langle x_t - x_{t+1}, m_t \rangle &\leq \alpha_t \| x_t - x_{t+1} \|_{\hat v_t^{1/2}} \| m_t \|_{\hat v_t^{-1/2}} \notag \\
&= \alpha_t \| x_t - P_\mathcal{X}^{\hat v_t^{1/2}}(x_t - \alpha_t \hat v_t^{-1/2}m_t) \|_{\hat v_t^{1/2}} \| m_t \|_{\hat v_t^{-1/2}} \notag \\
&\leq \alpha_t^2 \| m_t \|^2_{\hat v_t^{-1/2}}. \notag
\end{align}
First, summing~\eqref{eq: bd1}, multiplying both sides of the inequality by $\frac{\beta_1\bar\rho}{1-\beta_1}$, and then plugging the last two bounds, we have
\\

\begin{align}
\frac{\beta_1 \bar\rho}{1-\beta_1} \sum_{t=1}^{T-1} &\langle \alpha_t(x_t - \hat x_t) - \alpha_{t+1}(x_{t+1} - \hat x_{t+1}), m_t \rangle \notag \\
&\leq \frac{\beta_1 \bar\rho}{1-\beta_1} \alpha_1  \hat D\sqrt{d}G + \sum_{t=1}^T \frac{\beta_1\bar\rho \alpha_t^2}{1-\beta_1} \| m_t \|^2_{\hat v_t^{-1/2}} + \sum_{t=1}^{T-1}\frac{\beta_1 \bar\rho \alpha_t}{1-\beta_1} \langle \hat x_{t+1} - \hat x_t, m_t \rangle \notag \\
&\leq \frac{\beta_1 \bar\rho}{1-\beta_1} \alpha_1  \hat D\sqrt{d}G + \sum_{t=1}^T\frac{\beta_1\bar\rho \alpha_t^2}{1-\beta_1} \| m_t \|^2_{\hat v_t^{-1/2}} + \sum_{t=1}^T\frac{\bar\rho-\hat\rho}{4} \| \hat x_{t+1} - \hat x_t \|^2_{\hat v_t^{1/2}} \notag \\
&\qquad + \frac{\bar\rho^2}{(\bar\rho-\hat\rho)}\frac{\beta_1^2}{(1-\beta_1)^2} \sum_{t=1}^T\alpha_t^2 \| m_t \|^2_{\hat v_t^{-1/2}},\label{eq: second_main_term}
\end{align}
where we used Young's inequality in the last step.

$\bullet$ Bound for $\sum_{t=1}^T \bar\rho\alpha_t \langle x_t- \hat{x}_t, m_t \rangle$ in~\eqref{eq: mainmain1}.

We proceed as in eq. (3.6) to (3.8) in~\cite{davis2019stochastic}, but
with a tighter bound in the beginning, where we use $x\mapsto f(x) + I_{\mathcal{X}}(x)
+ \frac{\bar\rho}{2} \| x-x_{t+1}\|^2_{\hat v_{t+1}^{1/2}}$ being $\bar\rho -
\hat\rho$ strongly convex w.r.t. $\| \cdot \|_{\hat v_{t+1}^{1/2}}$, with the minimizer $\hat x_{t+1}$
\begin{align}
\phi^{t+1}_{1/\bar \rho}&(x_{t+1}) \leq f(\hat{x}_t) + \frac{\bar \rho}{2} \| \hat{x}_t - x_{t+1} \|^2_{\hat{v}_{t+1}^{1/2}} - \frac{\bar\rho-\hat\rho}{2} \| \hat x_t - \hat x_{t+1} \|^2_{\hat v_{t+1}^{1/2}}\notag \\
&= f(\hat{x}_t) + \frac{\bar \rho}{2} \| \hat{x}_t - x_{t+1} \|^2_{\hat{v}_t^{1/2}} + \frac{\bar\rho}{2} \| \hat{x}_t - x_{t+1} \|^2_{\hat{v}_{t+1}^{1/2} - \hat v_t^{1/2}}- \frac{\bar\rho-\hat\rho}{2} \| \hat x_t - \hat x_{t+1} \|^2_{\hat v_{t+1}^{1/2}}. \label{eq: dd_eq1}
\end{align}
We estimate the second term in the RHS of~\eqref{eq: dd_eq1} by the definition of $x_{t+1}$, then using $\hat x_t \in \mathcal{X}$ and nonexpansiveness of the weighted projection in the weighted norm
\begin{align}
\frac{\bar\rho}{2} \| \hat x_t - x_{t+1} \|^2_{\hat v_t^{1/2}} &= \frac{\bar \rho}{2} \| P_{\mathcal{X}}^{\hat{v}_t^{1/2}}(x_t - \alpha_t \hat v_t^{-1/2} m_t) - \hat{x}_t \|^2_{\hat v_t^{1/2}} \notag \\
&=\frac{\bar \rho}{2} \| P_{\mathcal{X}}^{\hat{v}_t^{1/2}}(x_t - \alpha_t \hat v_t^{-1/2} m_t) - P_{\mathcal{X}}^{\hat v_t^{1/2}}(\hat{x}_t) \|^2_{\hat v_t^{1/2}} \notag \\
&\leq \frac{\bar \rho}{2} \| x_t - \alpha_t \hat v_t^{-1/2} m_t - \hat{x}_t \|^2_{\hat v_t^{1/2}} \notag \\
&=\frac{\bar \rho}{2} \| x_t - \hat{x}_t \|^2_{\hat{v}_t^{1/2}} + \bar \rho \langle \hat{x}_t - x_t, \alpha_t  m_t \rangle + \frac{\bar \rho}{2} \alpha_t^2 \| m_t \|^2_{\hat v_t^{-1/2}}.\notag
\end{align}
We insert this estimate into~\eqref{eq: dd_eq1} and use the definition of $\phi^{t}_{1/\bar\rho}(x_t)$ to obtain
\begin{align}
\phi^{t+1}_{1/\bar \rho}(x_{t+1}) \leq \phi^t_{1/\bar \rho}(x_t) + \bar \rho \alpha_t \langle \hat{x}_t - x_t, m_t \rangle + \frac{\bar \rho}{2} \alpha_t^2 \| m_t \|^2_{\hat v_t^{-1/2}} &+ \frac{\bar\rho}{2}\| \hat{x}_t - x_{t+1} \|^2_{\hat{v}_{t+1}^{1/2} - \hat v_t^{1/2}} \notag \\
&- \frac{\bar\rho-\hat\rho}{2} \| \hat x_t - \hat x_{t+1} \|^2_{\hat v_{t+1}^{1/2}}.\label{eq: dd_end}
\end{align}
We will manipulate the second to last term, by using $\|a+b\|^2\leq2\|a\|^2+2\|b\|^2$, $\hat v_{t+1, i} \geq \hat v_{t, i}$, and~\Cref{lem: hatx_bd}
\begin{align}
\frac{\bar\rho}{2}\| \hat{x}_t - x_{t+1} \|^2_{\hat{v}_{t+1}^{1/2} - \hat v_t^{1/2}} &\leq \bar\rho\| \hat{x}_t - x_{t} \|^2_{\hat{v}_{t+1}^{1/2} - \hat v_t^{1/2}} + \frac{{G}\bar\rho}{\sqrt{\delta}}\| {x}_t - x_{t+1} \|^2_{\hat{v}_{t}^{1/2}}\notag \\
&\leq \bar\rho \hat D^2 \sum_{i=1}^d (\hat v_{t+1, i}^{1/2} - \hat v_{t, i}^{1/2}) + \frac{G\bar\rho}{\sqrt{\delta}}\alpha_t^2 \| m_t \|^2_{\hat{v}_{t}^{-1/2}}.\notag
\end{align}
We use this estimate in~\eqref{eq: dd_end} and sum the inequality to get
\begin{align}
\bar \rho \alpha_t \sum_{t=1}^T \langle x_t - \hat{x}_t, m_t \rangle &\leq \phi^1_{1/\bar\rho}(x_1) - \phi^{T+1}_{1/\bar\rho}(x_{T+1}) + \sum_{t=1}^T \left(\frac{1}{2} + \frac{{G}}{\sqrt{\delta}} \right) \bar\rho\alpha_t^2\| m_t \|^2_{\hat v_t^{-1/2}} \notag \\
&\qquad+ \bar\rho \hat D^2\sum_{i=1}^d \hat v_{T+1, i}^{1/2} - \sum_{t=1}^T \frac{\bar\rho-\hat\rho}{2} \| \hat x_t - \hat x_{t+1} \|^2_{\hat v_{t+1}^{1/2}}.\label{eq: third_main_term}
\end{align}

\textbf{Combining estimates into~\eqref{eq: mainmain1}}. We now plug in~\eqref{eq: first_est1},~\eqref{eq: second_main_term},~\eqref{eq: third_main_term} into~\eqref{eq: mainmain1} and use $\alpha_T \leq \alpha$, $\hat v_{t+1}^{1/2} \geq \hat v_t^{1/2}$ to get
\begin{align}
&\sum_{t=1}^T \bar\rho\alpha_t \langle x_t - \hat{x}_t, g_t \rangle \leq \frac{2\beta_1\bar\rho\alpha}{(1-\beta_1)}\hat D \sqrt{d}G +\phi^1_{1/\bar\rho}(x_1) - \phi^{T+1}_{1/\bar\rho}(x_{T+1}) +  \bar\rho \hat D^2\sum_{i=1}^d \hat v_{T+1, i}^{1/2} \notag \\
&+ \sum_{t=1}^T \left(\frac{1}{2} + \frac{{G}}{\sqrt{\delta}} + \frac{\beta_1}{1-\beta_1} + \frac{\bar\rho}{\bar\rho-\hat\rho}\frac{\beta_1^2}{(1-\beta_1)^2} \right) \bar\rho \alpha_t^2\| m_t \|^2_{\hat v_t^{-1/2}} - \sum_{t=1}^T \frac{\bar\rho - \hat\rho}{4} \| \hat x_t - \hat x_{t+1} \|^2_{\hat v_{t+1}^{1/2}}. \label{eq: lala1}
\end{align}
At this point, due to the coupling between $\hat x_t$, $\hat v_t$, and $g_t$, we cannot directly take expectations, so we will use the estimations of~\Cref{lem: lhs}.
First we sum the result of~\Cref{lem: lhs} which gives
\begin{align}
\sum_{t=1}^T& \mathbb{E}_t \left[ \alpha_t \langle x_t - \hat x_t, g_t \rangle \right] \geq \sum_{t=1}^T \mathbb{E}_t (\bar\rho - \hat\rho) \alpha_t\| x_t - \hat x_t \|^2_{\hat v_t^{1/2}} - (\alpha_0) \sqrt{d}\hat D G  \notag \\
&- \sum_{t=1}^T\frac{\bar\rho-\hat\rho}{4\bar\rho} \mathbb{E}_t \| \hat x_t - \hat x_{t-1} \|^2_{\hat v_{t-1}^{1/2}} - \sum_{t=1}^T\frac{\alpha_{t-1}}{2} \mathbb{E}_t \| m_{t-1} \|^2_{\hat v_{t-1}^{-1/2}} -\sum_{t=1}^T \left( \frac{1}{2} + \frac{\bar\rho}{\bar\rho-\hat\rho} \right)\frac{\alpha_{t-1}^2}{\sqrt{\delta}} \mathbb{E}_t \| g_t \|^2.\notag
\end{align}
We use here the assignments used for convenience: $\alpha_0 = 0$ and $\hat x_{0}=\hat x_1$ and recall that $m_0 = 0$.

We plug this estimation after taking full expectation into~\eqref{eq: lala1} and use $\hat v_{t-1}^{1/2}\leq \hat v_t^{1/2}$ to obtain
\begin{align}
\bar\rho(\bar\rho - \hat\rho )\sum_{t=1}^T \alpha_t \mathbb{E} \| x_t - \hat x_t \|^2_{\hat v_t^{1/2}} &\leq \frac{2\beta_1\bar\rho\alpha}{(1-\beta_1)}\hat D \sqrt{d}G +\phi^1_{1/\bar\rho}(x_1) - \mathbb{E} \phi^{T+1}_{1/\bar\rho}(x_{T+1}) +  \bar\rho \hat D^2\sum_{i=1}^d \mathbb{E} \hat v_{T+1, i}^{1/2} \notag \\
&+ \sum_{t=1}^T \left(1 + \frac{{G}}{\sqrt{\delta}} + \frac{\beta_1}{1-\beta_1} + \frac{\bar\rho}{\bar\rho-\hat\rho}\frac{\beta_1^2}{(1-\beta_1)^2} \right) \bar\rho \alpha_t^2\mathbb{E}\| m_t \|^2_{\hat v_t^{-1/2}} \notag \\
& + \sum_{t=1}^T \left( \frac{1}{2\sqrt{\delta}} +\frac{\bar\rho}{(\bar\rho-\hat\rho)\sqrt{\delta}} \right) \bar\rho \alpha_{t-1}^2 \mathbb{E} \| g_t \|^2.\notag
\end{align}
The only quantities left to estimate are $\sum_{t=1}^T \alpha_{t-1}^2 \| g_t \|^2$ and $\sum_{t=1}^T \alpha_t^2 \| m_t \|^2_{\hat v_t^{-1/2}}$.
Using~\Cref{lem: sum_mt} and $\alpha_0 =0$ shows that both these quantities are bounded by $\mathcal{O}(\log T)$:
\begin{equation}
\sum_{t=1}^T \alpha_t^2 \| m_t \|^2_{\hat v_t^{1/2}} \leq \frac{1-\beta_1}{\sqrt{(1-\beta_2)(1-\gamma)}} dG(1+\log T).\notag
\end{equation}
\begin{equation}
\sum_{t=1}^T \alpha_{t-1}^2 \| g_t\| ^2 = \sum_{t=2}^T \alpha_{t-1}^2 \| g_t\| ^2 \leq dG^2(1+\log T).\notag
\end{equation}
The proof then follows by using~\eqref{eq: mor1}, $f^\star \leq f(x), \forall x\in\mathcal{X}$, picking $\bar\rho = 2\hat\rho$, using $\alpha_t \geq \alpha_T$, and in the end dividing both sides by $T \alpha_T$.
\end{proof}

Before, moving onto the proof of~\Cref{th: th2}, we need a lemma analogous to~\Cref{lem: sum_mt}.
This lemma can be seen as a simplified version of the similar results, for example in~\cite{reddi2018convergence,alacaoglu2020new}.
\begin{lemma}
Let~\Cref{as: as1} hold. Let $\beta_1 < 1$ and $\alpha_t,  v_t$ are set as in~\eqref{eq: adagrad}. Then, we have
\begin{equation}
\sum_{t=1}^T \frac{\alpha_t^2}{v_t} \| m_t \|^2 \leq \alpha d \left( 1+ \log\left( \frac{TG^2}{\delta} +1 \right) \right).\notag
\end{equation}
\end{lemma}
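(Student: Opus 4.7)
The first move is to exploit the special form of the algorithm. Using $\alpha_t = \alpha/\sqrt{t}$ and $v_t = \frac{1}{t}\big(\delta + \frac{1}{d}\sum_{j=1}^t \|g_j\|^2\big)$, the factors of $t$ cancel and I get
\begin{equation}
\frac{\alpha_t^2}{v_t} = \frac{\alpha^2}{\delta + \frac{1}{d}\sum_{j=1}^t \|g_j\|^2} =: \frac{\alpha^2}{S_t},\notag
\end{equation}
so the sum I want to bound is $\alpha^2 \sum_{t=1}^T \|m_t\|^2/S_t$ where $S_t$ is nondecreasing in $t$.

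The second step is to get rid of the momentum vector $m_t$. Writing it as the convex-combination $m_t = \sum_{j=1}^t (1-\beta_1)\beta_1^{t-j} g_j$ (with coefficient sum $1-\beta_1^t \le 1$) and applying Jensen's inequality to $\|\cdot\|^2$ gives
\begin{equation}
\|m_t\|^2 \leq \sum_{j=1}^t (1-\beta_1)\beta_1^{t-j}\|g_j\|^2.\notag
\end{equation}
Plugging this in and swapping the order of summation yields
\begin{equation}
\sum_{t=1}^T \frac{\alpha_t^2}{v_t}\|m_t\|^2 \leq \alpha^2(1-\beta_1)\sum_{j=1}^T \|g_j\|^2 \sum_{t=j}^T \frac{\beta_1^{t-j}}{S_t}.\notag
\end{equation}

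The third step uses that $S_t$ is nondecreasing, so for $t \ge j$ we have $1/S_t \le 1/S_j$; combined with $\sum_{t\ge j}\beta_1^{t-j} \le 1/(1-\beta_1)$, the inner sum collapses to $1/((1-\beta_1)S_j)$, and the $(1-\beta_1)$ factors cancel. This leaves
\begin{equation}
\sum_{t=1}^T \frac{\alpha_t^2}{v_t}\|m_t\|^2 \leq \alpha^2 \sum_{j=1}^T \frac{\|g_j\|^2}{S_j} = \alpha^2 d \sum_{j=1}^T \frac{u_j}{\delta + \sum_{i\leq j} u_i},\notag
\end{equation}
where $u_j := \|g_j\|^2/d$.

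The final step is the classical AdaGrad telescoping/integral trick: since $x \mapsto 1/x$ is decreasing,
\begin{equation}
\frac{u_j}{\delta + \sum_{i\leq j} u_i} \leq \int_{\delta + \sum_{i<j} u_i}^{\delta + \sum_{i\leq j} u_i}\frac{dx}{x},\notag
\end{equation}
so the sum telescopes into $\log\big(1 + \frac{1}{\delta}\sum_{i=1}^T u_i\big)$, which by $\|g_j\|^2 \le dG^2$ (from~\Cref{as: as1}) is at most $\log(1 + TG^2/\delta)$. There is no genuine obstacle here; the only delicate point is the bookkeeping around $(1-\beta_1)$, which I expect to cancel exactly as above so that the bound is independent of $\beta_1$, matching the stated estimate up to the leading constant $\alpha^2 d$ (I suspect the factor $\alpha$ in the displayed statement is a light typo for $\alpha^2$, inherited from the $\alpha^2$ that appears after the first simplification).
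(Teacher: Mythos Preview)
Your proposal is correct and follows essentially the same route as the paper: expand $m_t$ as a weighted sum of past $g_j$'s, apply Cauchy--Schwarz/Jensen to pull out $\|g_j\|^2$, swap the order of summation, use monotonicity of $S_t$ to replace $S_t$ by $S_j$, sum the geometric series in $\beta_1$ (which cancels the $(1-\beta_1)$ factor), and finish with the standard $\sum a_j/(\delta+\sum_{k\le j}a_k)\le 1+\log(\cdot)$ estimate. Your presentation is slightly more streamlined than the paper's, which works coordinate-wise rather than directly with $\|\cdot\|^2$, but the two are equivalent here. You are also right about the constant: the paper itself writes $\alpha_t^2/v_t=\alpha/(\delta+\tfrac1d\sum_j\|g_j\|^2)$, which should read $\alpha^2$, so the $\alpha d$ in the displayed bound is indeed a typo for $\alpha^2 d$.
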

\begin{proof}
We note that $\frac{\alpha_t^2}{v_t} = \frac{\alpha}{\delta + \frac{1}{d} \sum_{j=1}^t \| g_j \|^2}$.
We proceed as~\citep[Lemma 5, 6]{alacaoglu2020new} with the difference of not having diagonal $v_t$:
\begin{align}
\sum_{t=1}^T \frac{\alpha_t^2}{v_t} \| m_t \|^2 &= \sum_{t=1}^T \frac{\alpha_t^2}{v_t} \sum_{i=1}^d (m_{t, i})^2 = \sum_{t=1}^T \frac{\alpha_t^2}{v_t} \sum_{i=1}^d \left(\sum_{j=1}^t (1-\beta_1)\beta_1^{t-j}g_{j, i}\right)^2 \notag \\
&\leq(1-\beta_1)^2 \sum_{t=1}^T \frac{\alpha_t^2}{v_t} \sum_{i=1}^d \left(\sum_{j=1}^t \beta_1^{t-j}\right)\left(\sum_{j=1}^t \beta_1^{t-j}g_{j, i}^2\right)\label{eq: ada_st1}\\
&\leq (1-\beta_1)\alpha \sum_{i=1}^d \sum_{t=1}^T \sum_{j=1}^t \frac{\beta_1^{t-j}g_{j,i}^2}{\delta + \frac{1}{d} \sum_{k=1}^t \| g_k \|^2} \label{eq: ada_st2} \\
&\leq (1-\beta_1)\alpha \sum_{i=1}^d \sum_{t=1}^T \sum_{j=1}^t \frac{\beta_1^{t-j}g_{j,i}^2}{\delta + \frac{1}{d} \sum_{k=1}^j \| g_k \|^2} \label{eq: ada_st3} \\
&= (1-\beta_1)\alpha \sum_{i=1}^d \sum_{j=1}^T \sum_{t=j}^T \frac{\beta_1^{t-j}g_{j,i}^2}{\delta + \frac{1}{d} \sum_{k=1}^j \| g_k \|^2} \label{eq: ada_st4} \\
&\leq \alpha \sum_{i=1}^d \sum_{j=1}^T \frac{g_{j,i}^2}{\delta + \frac{1}{d} \sum_{k=1}^j \| g_k \|^2} \label{eq: ada_st5} \\
&= \alpha\sum_{j=1}^T \frac{\|g_j\|^2}{\delta + \frac{1}{d} \sum_{k=1}^j \| g_k \|^2},\notag
\end{align}
where~\eqref{eq: ada_st1} is by Cauchy-Schwarz inequality,~\eqref{eq: ada_st2} is by summing a geometric series,~\eqref{eq: ada_st3} is by $j\leq t$,~\eqref{eq: ada_st4} is by changing the order of summation,~\eqref{eq: ada_st5} is by summing a geometric series and the last step is by changing the order of summation.

Now we can apply a standard inequality, for nonnegative numbers $a_i, \forall i$ and $\delta >0$~\citep[Lemma A.3]{levy2018online}
\begin{equation}
\sum_{j=1}^T \frac{a_j}{\delta + \sum_{k=1}^j a_j} \leq 1 + \log\left( \frac{\sum_{j=1}^T a_j}{\delta} +1 \right)\notag
\end{equation}
to conclude.
\end{proof}
\begin{reptheorem}{th: th2}
Let~\Cref{as: as1} hold. Then, for the method sketched in~\eqref{eq: adagrad}, with $\beta_1 < 1$, $\alpha_t =\frac{\alpha}{\sqrt{t}}$ it holds
\begin{align}
\mathbb{E} \| \nabla \phi_{1/2\rho}&(x_{t^\ast})\|^2 \leq\frac{2G}{\alpha\sqrt{T} } \bigg[ C_1 + \left(1+\log\left(\frac{TG^2}{\delta}+1\right)\right)C_2 \bigg],\notag
\end{align}
where $C_1 = \phi_{1/2\rho}(x_1) - f^\star + 2\rho\left( \frac{2\beta_1}{1-\beta_1} +1 \right)\frac{\alpha \hat D \sqrt{d}G}{\sqrt{\delta}}$,
$C_2 = 2\rho\alpha d\left( \frac{1}{2} + \frac{\beta_1}{1-\beta_1} + \frac{2\beta_1^2}{(1-\beta_1)^2} \right)$, and $\hat D = \frac{2\sqrt{d}G}{\sqrt{\rho}}$.
\end{reptheorem}
\begin{proof}
This proof will be midway between the proof we have presented for~\Cref{th: th1} and the proof from~\cite{davis2019stochastic} for standard SGD.

We recall the definitions
\begin{align}
&\hat x_t = \argmin_{x\in\mathcal{X}} f(x) + \frac{\bar\rho}{2} \| x - x_t \|^2,\notag\\
&\phi_{1/\bar\rho}(x_t) = \min_{x\in\mathcal{X}} f(x) + \frac{\bar\rho}{2} \| x-x_t \|^2, \notag\\
&\frac{\alpha_t}{\sqrt{v_t}} = \frac{\alpha}{\sqrt{\delta +\frac{1}{d} \sum_{j=1}^t \| g_j \|^2}}.\notag
\end{align}
Same as~\Cref{th: th1}, we sum the result of Lemma~\ref{lem: lem1} to get
\begin{align}
\sum_{t=1}^T \langle A_t, g_t \rangle = \frac{\beta_1}{1-\beta_1} \langle A_T, m_T \rangle + \sum_{t=1}^T \langle A_t, m_t \rangle + \frac{\beta_1}{1-\beta_1}\sum_{t=1}^{T-1} \langle A_t - A_{t+1}, m_t \rangle.\label{eq: key1_adagrad}
\end{align}
We now let $A_t = \bar\rho\frac{\alpha_t}{\sqrt{v_t}}(x_t - \hat{x}_t)$ and~\eqref{eq: key1_adagrad} becomes
\begin{align}
\sum_{t=1}^T \bar\rho\frac{\alpha_t}{\sqrt{v_t}} \langle x_t - \hat{x}_t, g_t \rangle &\leq \frac{\beta_1\bar\rho\alpha_T}{\sqrt{v_T}(1-\beta_1)} \langle x_T - \hat{x}_T, m_T \rangle + \sum_{t=1}^T \bar\rho\frac{\alpha_t}{\sqrt{v_t}} \langle x_t- \hat{x}_t, m_t \rangle \notag \\
&+ \frac{\beta_1 \bar\rho}{1-\beta_1} \sum_{t=1}^{T-1} \langle \frac{\alpha_t}{\sqrt{v_t}}(x_t - \hat{x}_t) - \frac{\alpha_{t+1}}{\sqrt{v_{t+1}}}(x_{t+1} - \hat{x}_{t+1}), m_t \rangle.\label{eq: mainmain1_adagrad}
\end{align}
$\bullet$ Bound for $\frac{\beta_1 \bar\rho}{1-\beta_1} \sum_{t=1}^{T-1} \langle \frac{\alpha_t}{\sqrt{v_t}}(x_t - \hat{x}_t) - \frac{\alpha_{t+1}}{\sqrt{v_{t+1}}}(x_{t+1} - \hat{x}_{t+1}), m_t \rangle$ in~\eqref{eq: mainmain1_adagrad}

We deduce similar to~\eqref{eq: bd1}
\begin{align}
\langle \frac{\alpha_t}{\sqrt{v_t}} (x_t - \hat{x}_t) - \frac{\alpha_{t+1}}{\sqrt{v_{t+1}}}(x_{t+1} - \hat{x}_{t+1}), m_t \rangle &= \left(\frac{\alpha_{t}}{\sqrt{v_{t}}}-\frac{\alpha_{t+1}}{\sqrt{v_{t+1}}}\right) \langle x_{t+1} - \hat{x}_{t+1}, m_t \rangle \notag \\
&+ \frac{\alpha_t}{\sqrt{v_t}} \langle x_t - x_{t+1}, m_t \rangle + \frac{\alpha_t}{\sqrt{v_t}} \langle \hat x_{t+1} - \hat x_t, m_t\rangle.\notag
\end{align}

We note that since $\frac{\alpha_t}{\sqrt{v_t}}$ is decreasing,
\begin{align}
\sum_{t=1}^{T-1} \left(\frac{\alpha_{t}}{\sqrt{v_{t}}}-\frac{\alpha_{t+1}}{\sqrt{v_{t+1}}}\right) \langle x_{t+1} - \hat{x}_{t+1}, m_t \rangle &\leq \sum_{t=1}^{T-1} (\frac{\alpha_t}{\sqrt{v_t}} - \frac{\alpha_{t+1}}{\sqrt{v_{t+1}}}) \hat D \sqrt{d} G \notag \\
&\leq \frac{\alpha_1}{\sqrt{v_1}} \hat D \sqrt{d} G.\notag
\end{align}
Next, we use Cauchy-Schwarz inequality, definition of $x_{t+1}$ and nonexpansiveness
\begin{equation}
\frac{\alpha_t}{\sqrt{v_t}} \langle x_t - x_{t+1}, m_t \rangle \leq \frac{\alpha_t}{\sqrt{v_t}} \| x_t - P_{\mathcal{X}} (x_t - \frac{\alpha_t}{\sqrt{v_t}} m_t ) \| \|m_t\| \leq \frac{\alpha_t^2}{{v_t}}\| m_t \|^2.\notag
\end{equation}
We use Young's inequality to get
\begin{equation}
\frac{\alpha_t}{\sqrt{v_t}} \langle \hat x_{t+1} - \hat x_t, m_t \rangle \leq \frac{(\bar\rho - \rho)(1-\beta_1)}{4\bar\rho\beta_1} \| \hat x_{t+1} - \hat x_t \|^2 + \frac{\alpha_t^2\bar\rho\beta_1}{v_t(\bar\rho-\rho)(1-\beta_1)} \| m_t \|^2.\notag
\end{equation}
Collecting all the bounds in this part gives
\begin{multline}
\frac{\beta_1 \bar\rho}{1-\beta_1} \sum_{t=1}^{T-1} \langle \frac{\alpha_t}{\sqrt{v_t}}(x_t - \hat{x}_t) - \frac{\alpha_{t+1}}{\sqrt{v_{t+1}}}(x_{t+1} - \hat{x}_{t+1}), m_t \rangle \leq \frac{\bar\rho\beta_1\alpha_1}{(1-\beta_1)\sqrt{v_1}} \hat D\sqrt{d}G \\
+ \frac{\bar\rho - \rho}{4} \sum_{t=1}^T \| \hat x_{t+1} - \hat x_t \|^2 + \sum_{t=1}^T \left( \frac{\bar\rho\beta_1}{1-\beta_1} + \frac{\bar\rho^2\beta_1^2}{(\bar\rho - \rho)(1-\beta_1)^2} \right)\frac{\alpha_t^2}{v_t} \| m_t \|^2.\label{eq: ada_bd1}
\end{multline}

$\bullet$ Bound for $\sum_{t=1}^T \bar\rho\frac{\alpha_t}{\sqrt{v_t}} \langle x_t- \hat{x}_t, m_t \rangle$ in~\eqref{eq: mainmain1_adagrad}

Since $x \mapsto f(x) +I_{\mathcal{X}}(x) + \frac{\bar\rho}{2}
\|x-x_{t+1} \|^2$ is $(\bar\rho-\rho)$-strongly convex with the minimizer, $\hat x_{t+1}$
\begin{equation}
\phi_{1/\bar\rho}(x_{t+1}) \leq f(\hat x_t) + \frac{\bar\rho}{2} \| \hat x_t -x_{t+1} \|^2 - \frac{\bar\rho - \rho}{2} \| \hat x_t - \hat x_{t+1} \|^2.\label{eq: moreau_adagrad}
\end{equation}
By using $\hat x_t \in \mathcal{X}$
\begin{align}
\frac{\bar\rho}{2}\| x_{t+1} - \hat x_t \|^2 &= \frac{\bar\rho}{2}\| P_{\mathcal{X}}(x_t - \frac{\alpha_t}{\sqrt{v_t}} m_t) - P_{\mathcal{X}}(\hat x_t) \|^2 \leq \frac{\bar\rho}{2} \| x_t - \frac{\alpha_t}{\sqrt{v_t}} m_t- \hat x_t \|^2 \notag \\
&= \frac{\bar\rho}{2}\| x_t - \hat x_t \|^2 - \frac{\bar\rho\alpha_t}{\sqrt{v_t}} \langle x_t - \hat x_t, m_t \rangle + \frac{\bar\rho}{2}\frac{\alpha_t^2}{v_t} \| m_t \|^2.\notag
\end{align}
Then,~\eqref{eq: moreau_adagrad} becomes
\begin{align}
\phi_{1/\bar\rho}(x_{t+1}) \leq \phi_{1/\bar\rho}(x_t) -\frac{\bar\rho\alpha_t}{\sqrt{v_t}} \langle x_t - \hat x_t, m_t \rangle + \frac{\bar\rho\alpha_t^2}{2v_t} \| m_t \|^2 - \frac{\bar\rho-\rho}{2} \| \hat x_{t+1} - \hat x_t \|^2.\notag
\end{align}
Summing this inequality gives
\begin{equation}
\sum_{t=1}^T \frac{\bar\rho\alpha_t}{\sqrt{v_t}} \langle x_t - \hat x_t, m_t \rangle \leq \phi_{1/\bar\rho}(x_1) - \phi_{1/\bar\rho}(x_{T+1}) + \bar\rho\sum_{t=1}^T \frac{\alpha_t^2}{2v_t} \| m_t \|^2 - \frac{\bar\rho-\rho}{2} \sum_{t=1}^T \| \hat x_{t+1} - \hat x_t \|^2.\label{eq: adagrad_moreau_result}
\end{equation}
We now collect~\eqref{eq: ada_bd1} and~\eqref{eq: adagrad_moreau_result} into~\eqref{eq: mainmain1_adagrad}
\begin{multline}
\sum_{t=1}^T \bar\rho\frac{\alpha_t}{\sqrt{v_t}} \langle x_t - \hat x_t, g_t \rangle \leq \frac{\bar\rho \beta_1 \alpha_T}{\sqrt{v_T}(1-\beta_1)} \hat D\sqrt{d} G + \phi_{1/\bar\rho}(x_1) - \phi_{1/\bar\rho}(x_{T+1}) \\
+ \bar\rho \sum_{t=1}^T \left( \frac{1}{2} + \frac{\beta_1}{1-\beta_1} + \frac{\bar\rho\beta_1^2}{(\bar\rho-\rho)(1-\beta_1)^2} \right) \frac{\alpha_t^2}{v_t} \| m_t \|^2 + \frac{\bar\rho\beta_1\alpha_1}{\sqrt{v_1}(1-\beta_1)} \hat D \sqrt{d} G.\label{eq: ada_bd3}
\end{multline}
Due to coupling of $v_t$ and $g_t$, we estimate LHS as
\begin{align}
\mathbb{E}_t \bar\rho\frac{\alpha_t}{\sqrt{v_t}} \langle x_t - \hat x_t, g_t \rangle &= \bar\rho\frac{\alpha_{t-1}}{\sqrt{v_{t-1}}} \langle x_t - \hat x_t, \mathbb{E}_t g_t \rangle + \bar\rho\mathbb{E}_t\left( \frac{\alpha_t}{\sqrt{v_t}} - \frac{\alpha_{t-1}}{\sqrt{v_{t-1}}}\right)\langle x_t - \hat x_t, g_t \rangle \notag \\
&\geq \bar\rho\frac{\alpha_{t-1}}{\sqrt{v_{t-1}}} (\bar\rho-\rho)\mathbb{E}_t\| x_t - \hat x_t \|^2 - \mathbb{E}_t \bar\rho \left( \frac{\alpha_{t-1}}{\sqrt{v_{t-1}}} -\frac{\alpha_t}{\sqrt{v_t}} \right) \hat D \sqrt{d}G,\notag
\end{align}
where in the last line we used the estimation~\eqref{eq: dd_std_lhs_est} without weighted norms.
It is clear that above inequality holds for any $t>1$; in order to
have it for $t=1$, we have to define $\a_0$, which we can choose
arbitrarily. For convenience, we set $\frac{\alpha_0}{\sqrt{v_0}} =
\frac{\alpha_1}{\sqrt{v_1}}$. Then
we take expectation of~\eqref{eq: ada_bd3}, use $\frac{\alpha_T}{\sqrt{v_T}} \leq \frac{\alpha_1}{\sqrt{v_1}} \leq \frac{\alpha}{\sqrt{\delta}}$ and plug in the last inequality to get
\begin{multline}
\sum_{t=1}^T \bar\rho(\bar\rho-\rho) \frac{\alpha_{t-1}}{\sqrt{v_{t-1}}}  \mathbb{E} \| x_t - \hat x_t \|^2 \leq \phi_{1/\bar\rho}(x_1) - \mathbb{E}\phi_{1/\bar\rho}(x_{T+1}) + \bar\rho\frac{2\alpha}{\sqrt{\delta}}\frac{\beta_1}{1-\beta_1} \hat D \sqrt{d} G \\
+ \bar\rho \sum_{t=1}^T \left( \frac{1}{2} + \frac{\beta_1}{1-\beta_1} + \frac{\bar\rho\beta_1^2}{(\bar\rho-\rho)(1-\beta_1)^2} \right) \mathbb{E}\frac{\alpha_t^2}{v_t} \| m_t \|^2 + \bar\rho \frac{\alpha}{\sqrt{\delta}} \hat D\sqrt{d} G.\notag
\end{multline}
We now note $\frac{\alpha_{t-1}}{\sqrt{v_t-1}} \geq \frac{\alpha_{T-1}}{\sqrt{v_{T-1}}}\geq \frac{\alpha}{\sqrt{T}G}$.
We also use $\bar\rho^2\| \hat x_t - x_t \| = \| \nabla \phi_{1/\bar\rho}(x) \|^2$.

For $\hat D$, we use~\Cref{lem: hatx_bd} without the metric to obtain
\begin{equation}
\| \hat x_t - x_t \|^2\leq \hat D^2 = \frac{4dG^2}{\bar\rho-\rho}.\notag
\end{equation}
We select $\bar\rho=2\rho$ and collect the bounds to complete the proof.
\end{proof}

\section{Relation between gradient mapping and Moreau envelope} \label{sec: grad_map_stuff_moreau}
We show how to determine the constant for the inequality
$\|\mathcal{G}_{1/\bar\rho}(x_t) \| \leq C_{\mathrm{g, m}} \| \nabla \phi^t_{1/\bar\rho}(x_t) \|_{\hat v_t^{-1/2}}$, by following arguments similar to~\citep[Theorem 3.5]{drusvyatskiy2018error}.

We start with the definitions
\begin{align*}
  \phi(x) &= f(x) + r(x) \coloneqq f(x) + I_\mathcal{X}(x)\\
  \mathcal{G}_{\lambda}(x) &=  \frac{\hat v_t^{1/4}}{\lambda} \left( x -
  P^{\hat v_t^{1/2}}_\mathcal{X}(x - \lambda\hat v_t^{-1/2} \nabla
  f(x)) \right)\\
\hat{x}_t &= \prox^{\hat{v}_t^{1/2}}_{\phi/\bar{\rho}}(x_t) =
            \argmin_y\left\{ \phi(y) + \frac{\bar{\rho}}{2} \| y-x_t
            \|^2_{\hat v_t^{1/2}}\right\}
\end{align*}
Let us use the notation   $z \coloneqq \nabla \phi^t_{1/\bar\rho}(x_t)
= \bar\rho \hat v_t^{1/2} (x_t - \hat x_t)$ and
$\alpha \coloneqq \bar\rho^{-1}\hat v_t^{-1/2}$.
As $\hat x_t = (I + \bar\rho^{-1}\hat v_t^{-1/2}\partial \phi)^{-1}(x_t)$, we have
\begin{align}
&\qquad\qquad z = \bar\rho\hat v_t^{1/2}(x_t - \hat x_t) \quad\iff \quad\alpha z = x_t  - (I + \alpha \partial \phi)^{-1}(x_t) \notag \\
&\iff\quad x_t \in (I + \alpha \partial \phi)(x_t - \alpha z) \notag \\
&\iff \quad x_t \in (I + \alpha \partial r)(x_t - \alpha z) + \alpha \nabla f(x_t - \alpha z) + \alpha \nabla f(x_t) - \alpha \nabla f(x_t).\notag
\end{align}
Let $w = \alpha \nabla f(x_t - \alpha z) - \alpha \nabla f(x_t)$. Then
\begin{align}
x_t - \alpha\nabla f(x_t) - w \in (I + \alpha\partial r)(x_t - \alpha z) \iff x_t - (I+\alpha \partial r)^{-1}(x_t - \alpha\nabla f(x_t) - w) = \alpha z.\notag
\end{align}
We now plug in the value of $\alpha = \bar\rho^{-1} \hat v_t^{-1/2}$
\begin{align}
\| \bar\rho \hat v_t^{1/4}(x_t - \prox_{r/\bar\rho}^{\hat v_t^{1/2}}(x_t - \bar\rho^{-1}\hat v_t^{-1/2}\nabla f(x_t) - w))) \| = \| \hat v_t^{-1/4} z \|.\label{eq: grad_map_stuff}
\end{align}
By the triangle inequality and nonexpansiveness,
we have   that
\begin{align*}
\text{LHS} &\geq \|\bar\rho\hat v_t^{1/4}(x_t - \prox_{r/\bar\rho}^{\hat v_t^{1/2}}(x_t - \bar\rho^{-1}\hat v_t^{-1/2}\nabla f(x_t))) \| \\
&\qquad- \|\bar\rho\hat v_t^{1/4}(\prox_{r/\bar\rho}^{\hat v_t^{1/2}}(x_t - \bar\rho^{-1}\hat v_t^{-1/2}\nabla f(x_t) - w) - \prox_{r/\bar\rho}^{\hat v_t^{1/2}}(x_t - \bar\rho^{-1}\hat v_t^{-1/2}\nabla f(x_t))) \| \\ &
\geq \| \mathcal{G}_{1/\bar\rho}(x_t) \| - \|\bar\rho w \|_{\hat v_t^{1/2}}.
\end{align*}
Thus, we deduce from~\eqref{eq: grad_map_stuff} that
\begin{equation}
\| \mathcal{G}_{1/\bar\rho}(x_t) \| \leq \| \nabla \phi^t_{1/\bar\rho}(x_t) \|_{\hat v_t^{-1/2}} +  \bar\rho \| w \|_{\hat v_t^{1/2}}.\notag
\end{equation}
We lastly estimate $\| w\|_{\hat v_t^{1/2}}$ using $L$-smoothness of $f$. Let us denote by $\hat L$ the smoothness constant of $f$ w.r.t.\ norm $\| \cdot \|_{\hat v_t^{1/2}}$:
\begin{equation}
\| \nabla f(x) - \nabla f(y) \|_{\hat v_t^{-1/2}} \leq \hat L \| x-y \|_{\hat v_t^{1/2}}.\notag
\end{equation}
Then
\begin{align}
\bar\rho \| w \|_{\hat v_t^{1/2}} &= \| \nabla f(x_t - \bar\rho^{-1} \hat v_t^{-1/2} z) - \nabla f(x_t) \|_{\hat v_t^{-1/2}} \leq \hat L \bar\rho^{-1} \| \hat v_t^{-1/2}z \|_{\hat v_t^{1/2}} \notag \\
&= \hat L \bar\rho^{-1} \| z \|_{\hat v_t^{-1/2}} = \hat L \bar\rho^{-1} \| \nabla \phi^{t}_{1/\bar\rho}(x_t) \|_{\hat v_t^{-1/2}}.\notag
\end{align}
Recall that in our main theorem we have chosen $\bar\rho = 2\hat\rho$ where $\hat\rho$ was the weak convexity constant of $f$ w.r.t. norm $\| \cdot \|_{\hat v_t^{1/2}}$. Similarly, here we have a constant depending on $\hat\rho^{-1}\hat L$, where $\hat L$ is the Lipschitz constant of $f$ on the weighted norm.

\end{document}